\DeclareRobustCommand{\greektext}{%
  \fontencoding{LGR}\selectfont\def\encodingdefault{LGR}}
\DeclareRobustCommand{\textgreek}[1]{\leavevmode{\greektext #1}}
\theoremstyle{plain}
\newtheorem{thm}{\protect\theoremname}
\theoremstyle{plain}
\newtheorem{assumption}[thm]{\protect\assumptionname}
\theoremstyle{remark}
\newtheorem{rem}[thm]{\protect\remarkname}
\theoremstyle{plain}
\newtheorem{prop}[thm]{\protect\propositionname}
\newcommand{\tabincell}[2]{\begin{tabular}{@{}#1@{}}#2\end{tabular}}
\author{%
  Hui Chen$^*$ \\
  School of Mathematical Science\\
  Tongji University, Shanghai, P. R. China\\
  \texttt{hui.chen96@outlook.com} \\
  \And
  Fangqing Liu\thanks{Equal Contribution} \\
  School of Mathematical Science\\
  Tongji University, Shanghai, P. R. China\\
  \texttt{fangqingliu0@gmail.com}
  \And
  Yin Wang \\ 
  School of Electronics and Information Engineering\\
  Tongji University, Shanghai, P. R. China\\
  \texttt{yinw@tongji.edu.cn}
  \And
  Liyue Zhao \\
  Cloudwalk Inc.\\
  Shanghai, P. R. China \\
  \texttt{zhaoliyue@cloudwalk.cn} \\
  \And
  Hao Wu\thanks{Corresponding Author}\\
  School of Mathematical Science\\
  Tongji University, Shanghai, P. R. China \\
  \texttt{hwu@tongji.edu.cn} \\
}
\providecommand{\assumptionname}{Assumption}
\providecommand{\propositionname}{Proposition}
\providecommand{\remarkname}{Remark}
\providecommand{\theoremname}{Theorem}
\begin{document}
\title{A Variational Approach for Learning from Positive and Unlabeled Data}

\maketitle

\begin{abstract}
Learning binary classifiers only from positive and unlabeled (PU)
data is an important and challenging task in many real-world applications,
including web text classification, disease gene identification and
fraud detection, where negative samples are difficult to verify experimentally.
Most recent PU learning methods are developed based on the misclassification
risk of the supervised learning type, and they may suffer from inaccurate
estimates of class prior probabilities. In this paper, we introduce
a variational principle for PU learning that allows us to quantitatively
evaluate the modeling error of the Bayesian classifier directly from
given data. This leads to a loss function which can be efficiently
calculated without involving class prior estimation or any other intermediate
estimation problems, and the variational learning method can then
be employed to optimize the classifier under general conditions. We
illustrate the effectiveness of the proposed variational method on
a number of benchmark examples.
\end{abstract}

\section{Introduction\label{sec:Introduction}}

In many real-life applications, we are confronted with the task of
building a binary classification model from a number of positive data
and plenty of unlabeled data without extra information on the negative
data. For example, it is common in disease gene identification \cite{yang2012positive}
that only known disease genes and unknown genes are available, because
the reliable non-disease genes are difficult to obtain. Similar scenarios
occur in deceptive review detection \cite{ren2014positive}, web data
mining \cite{liu2007web}, inlier-based outlier detection \cite{smola2009relative},
etc. Such a task is certainly beyond the scope of the standard supervised
machine learning, and where \emph{positive-unlabeled} (PU) learning
comes in handy. A lot of heuristic approaches \cite{liu2002partially,peng2008svm,lu2010semi,chaudhari2012learning}
were proposed by identifying reliable negative data from the unlabeled
data, which heavily rely on the choice of the heuristic strategies
and the assumption of data separability (i.e., positive and negative
data are non-overlapping). The rank pruning (RP) \cite{northcutt2017learning}
provides a more general way by regarding the PU learning as a specific
positive-negative learning problem with noisy labels , but the data
separability is still necessary for the consistent noise estimation.

The risk estimator developed in \cite{du2014analysis,du2015convex}
promises an effective solution to PU learning. It calculates the risk
of a classifier $\Phi$ by
\begin{equation}
\mathrm{risk}(\Phi)=\pi_{P}\mathbb{E}_{\text{labeled data}}\left[\ell_{+}\left(\Phi(x)\right)-\ell_{-}\left(\Phi(x)\right)\right]+\mathbb{E}_{\text{unlabeled data}}\left[\ell_{-}\left(\Phi(x)\right)\right]\label{eq:unbiased-risk}
\end{equation}
and can achieve an unbiased estimation of the expected misclassification
risk (in the sense of supervised learning) via empirical averaging,
where $\ell_{+},\ell_{-}$ denotes the misclassification loss on positive
and negative data respectively, and $\pi_{P}=\mathbb{P}(y=+1)$ denotes
the \emph{class prior}, i.e., the proportion of positive data in the
unlabeled data. Then the classifier can be trained through minimization
of the estimated risk when $\pi_{P}$ is known. However, such a method
easily leads to severe overfitting. In order to address this difficulty,
a non-negative risk estimator is presented in \cite{kiryo2017positive},
which is biased but more robust to statistical noise. Another type
of misclassification risk based method, called PULD, was proposed
in \cite{zhang2019positive}, where PU learning is formulated as a
maximum margin classification problem for a given $\pi_{P}$, and
can be solved by efficient convex optimizers. But this method is applicable
only for linear classifiers in non-trainable feature spaces.

Recently, applications of generative adversarial networks (GAN) in
PU learning also have received growing attention \cite{hou2017generative,chiaroni2018learning},
where the generative models learn to generate fake positive and negative
samples (or only negative samples), and the classifier is trained
by using the fake samples. Experiments show that GAN can improve the
performance of PU learning when the size of positive labeled data
is extremely small, and the asymptotic correctness can be proved under
the condition that the exact value of $\pi_{P}$ is available \cite{hou2017generative}.

\paragraph{Problems of class prior estimation}

The class prior $\pi_{P}$ plays an important role in PU learning
as analyzed previously, but it cannot be automatically selected as
a trainable parameter. As an example, when trying to minimize the
risk defined in (\ref{eq:unbiased-risk}) w.r.t.~both $\pi_{P}$
and the classifier, we obtain a trivial solution with $\pi_{P}=1$
and all data being predicted as positive ones. Furthermore, it is
also difficult to adjust $\pi_{P}$ as a hyper-parameter by cross
validation unless some negative data are available in the validation
set. Hence, in many practical applications, class prior estimation
methods \cite{jain2016nonparametric,christoffel2016class,du2017class,bekker2018estimating}
are required, which usually involve kernel machines and are quite
computationally costly. Moreover, the experimental analysis in \cite{kiryo2017positive}
shows that the classification performance could be badly affected
by an inaccurate estimate.

\paragraph{Contributions}

In view of the above remark, it is natural to ask if \emph{an accurate
classifier can be obtained in PU learning without solving the hard
class prior estimation problem as an intermediate step}. Motivated
by this question, we introduce in this paper a variational principle
for PU learning, which allows us to evaluate the difference between
a given classifier and the ideal Bayesian classifier in a class prior-free
manner by using only distributions of labeled and unlabeled data.
As a consequence, one can efficiently and consistently approximate
Bayesian classifiers via variational optimization. Our theoretical
and experimental analysis demonstrates that, in contrast with the
existing methods, the variational principle based method can achieve
high classification accuracies in PU learning tasks without the estimation
of class prior or the assumption of data separability. A brief algorithmic
and theoretical comparison of VPU and selected previous schemes is
provided in Table \ref{tab:comparison}.

\begin{table}
\caption{\label{tab:comparison}A comparison of PU learning methods. Here uPU
and nnPU are proposed in \cite{du2014analysis,kiryo2017positive},
GenPU is presented in \cite{hou2017generative}, rank pruning \cite{northcutt2017learning}
is developed within the framework for classification with noisy labels,
the Rocchio-SVM method proposed in \cite{li2003learning} is a representative
method developed based on identification of reliable negative data,
and PULD \cite{zhang2019positive} is proposed based on the large
margin strategy. Rank pruning can be implemented with unknown class
prior, but it contains an estimator for class prior explicitly and
the estimator is consistent only in the case of data separability.}

\centering{}\begin{center}
\begin{tabular}{ccc} 
\toprule 
Method  & \tabincell{c}{Training without class prior\\ $\mathbb P(y=+1)$ or its estimate} &  \tabincell{c}{Consistency or optimality without\\ assumption of data separability}\\ 
\midrule
VPU & $\checkmark$  & $\checkmark$\\
\midrule
uPU/nnPU & $\times$ & $\checkmark$\\
\midrule
GenPU & $\times$ &  $\times$\\
\midrule
Rank pruning &  $\checkmark$ & $\times$\\
\midrule
\tabincell{c}{Rocchio-SVM} &  $\checkmark$ &  $\times$\\
\midrule
PULD & $\times$ & $\checkmark$\\
\bottomrule 
\end{tabular} 
\end{center}
\end{table}

\section{Problem setting and notations\label{sec:Problem-setting}}

Let us consider a binary classification problem where features $x\in\mathbb{R}^{d}$
and class labels $y\in\{-1,+1\}$ of instances are distributed according
to a joint distribution $\mathbb{P}(x,y)$. Suppose that we have a
positive dataset $\mathcal{P}=\{x_{1},\ldots,x_{M}\}$ and an unlabeled
dataset $\mathcal{U}=\{x_{M+1},\ldots,x_{M+N}\}$. The goal of PU
learning is to find a binary classifier based on $\mathcal{P}$ and
$\mathcal{U}$ so that class labels of unseen instances can be accurately
predicted. In this work, we aim to approximate the ideal Bayesian
classifier $\Phi^{*}(x)\triangleq\mathbb{P}(y=+1|x)$ with a parametric
model $\Phi$ based on the following assumptions:
\begin{assumption}
\label{assu:scar}Labeled and unlabeled data are independently drawn
as
\begin{equation}
\mathcal{P}=\{x_{i}\}_{i=1}^{M}\stackrel{\mathrm{i.i.d}}{\sim}f_{P},\quad\mathcal{U}=\{x_{i}\}_{i=M+1}^{M+N}\stackrel{\mathrm{i.i.d}}{\sim}f
\end{equation}
where $f_{P}\triangleq\mathbb{P}(x|y=+1)$ is the distribution of
the positive class and $f(x)\triangleq\mathbb{P}(x)$ denotes the
marginal distribution of the instance feature.
\end{assumption}

\begin{assumption}
\label{assu:max-Phi}There exists a set $\mathcal{A}\subset\mathbb{R}^{d}$
satisfying $\int_{\mathcal{A}}f_{P}(x)\mathrm{d}x>0$ and 
\begin{equation}
\Phi^{*}(x)=1,\quad\forall x\in\mathcal{A}.\label{eq:max-Phi}
\end{equation}
\end{assumption}

Here, Assumption \ref{assu:scar} is the traditional \emph{selected
completely at random} (SCAR) assumption in PU learning \cite{elkan2008learning,du2015convex}.
Assumption \ref{assu:max-Phi} implies that a set of $x$ are almost
surely positive, which is approximately satisfied in most practical
cases and actually a strong variant of the \textit{irreducibility}
assumption in literature of mixture proportion estimation of PU data
\cite{yao2020towards} (see Section \ref{subsec:Assumption-and-irreducibility}
in Suppl.~Material). In practice, $\mathcal{A}$ might be too small
and $\mathcal{P}$ is finite, so $\mathcal{A}\cap\mathcal{P}$ could
be empty. Thus we analyze the misclassification rate under a relaxation
of Assumption \ref{assu:max-Phi} (see Section \ref{sec:Theoretical-analysis}).

\section{Variational PU learning}

\subsection{Variational principle\label{subsec:Variational-principle}}

In this section we establish a novel variational principle for PU
learning without class prior estimation that will be used in rest
of this paper. According to the Bayes rule, for a given parametric
model $\Phi$ of the Bayesian classifier $\Phi^{*}$, the positive
data distribution $f_{P}$ can be approximated by
\begin{eqnarray}
f_{P}(x) & = & \frac{\mathbb{P}(y=+1|x)\mathbb{P}(x)}{\int\mathbb{P}(y=+1|x)\mathbb{P}(x)\mathrm{d}x}\nonumber \\
 & \approx & \frac{\Phi(x)f(x)}{\mathbb{E}_{f}[\Phi(x)]}\triangleq f_{\Phi}(x),\label{eq:f_phi}
\end{eqnarray}
and we can further prove that $f_{\Phi}=f_{P}$ if and only if $\Phi=\Phi^{*}$
under Assumptions \ref{assu:scar} and \ref{assu:max-Phi}.\footnote{All proofs can be found in Section \ref{sec:Analysis-of-VPU} in Suppl.~Material.}
Then, the approximation quality of $\Phi$ can be evaluated by some
divergence between $f_{P}$ and $f_{\Phi}$, e.g., the Kullback-Leibler
(KL) divergence $\mathrm{KL}(f_{P}||f_{\Phi})$. The above analysis
leads to our main theorem:
\begin{thm}
\label{thm:variational}For all $\Phi:\mathbb{R}^{d}\mapsto[0,1]$
with $\mathbb{E}_{f}\left[\Phi(x)\right]>0$,
\begin{equation}
\mathrm{KL}(f_{P}||f_{\Phi})=\mathcal{L}_{\mathrm{var}}(\Phi)-\mathcal{L}_{\mathrm{var}}(\Phi^{*}),\label{eq:KL}
\end{equation}
under Assumption \ref{assu:scar}, where\textsl{
\begin{equation}
\mathcal{L}_{\mathrm{var}}(\Phi)\triangleq\log\mathbb{E}_{f}\left[\Phi(x)\right]-\mathbb{E}_{f_{P}}\left[\log\Phi(x)\right].\label{eq:Lvar}
\end{equation}
}
\end{thm}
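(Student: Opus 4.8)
The plan is to expand the Kullback--Leibler divergence directly from its definition and substitute the closed form $f_{\Phi}(x) = \Phi(x) f(x) / \mathbb{E}_{f}[\Phi(x)]$, then collect the terms depending on $\Phi$ into $\mathcal{L}_{\mathrm{var}}(\Phi)$ and check that the remainder is a constant independent of $\Phi$. Writing $\mathrm{KL}(f_{P} \| f_{\Phi}) = \mathbb{E}_{f_{P}}[\log f_{P}(x) - \log f_{\Phi}(x)]$ and inserting the definition of $f_{\Phi}$ turns the integrand into $\log f_{P}(x) - \log \Phi(x) - \log f(x) + \log \mathbb{E}_{f}[\Phi(x)]$. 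Since $\log \mathbb{E}_{f}[\Phi(x)]$ is a number (it does not depend on the integration variable), it survives the $f_{P}$-expectation unchanged, and together with $-\mathbb{E}_{f_{P}}[\log \Phi(x)]$ it is exactly $\mathcal{L}_{\mathrm{var}}(\Phi)$. I would thus obtain $\mathrm{KL}(f_{P} \| f_{\Phi}) = \mathcal{L}_{\mathrm{var}}(\Phi) + C$, where $C = \mathbb{E}_{f_{P}}[\log f_{P}(x) - \log f(x)]$ contains no dependence on $\Phi$.

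It then remains only to identify $C$. The cleanest route is to observe that the decomposition $\mathrm{KL}(f_{P} \| f_{\Phi}) = \mathcal{L}_{\mathrm{var}}(\Phi) + C$ holds for every admissible $\Phi$, and to evaluate it at the anchor $\Phi = \Phi^{*}$. By Bayes' rule together with Assumption \ref{assu:scar}, $\Phi^{*}(x) f(x) = \mathbb{P}(y=+1 \mid x)\,\mathbb{P}(x) = \mathbb{P}(x, y=+1) = \pi_{P} f_{P}(x)$; integrating over $x$ gives $\mathbb{E}_{f}[\Phi^{*}(x)] = \pi_{P} > 0$, so $\Phi^{*}$ is itself admissible and $f_{\Phi^{*}} = \pi_{P} f_{P} / \pi_{P} = f_{P}$. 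Hence $\mathrm{KL}(f_{P} \| f_{\Phi^{*}}) = 0$, and substituting $\Phi = \Phi^{*}$ into the decomposition yields $0 = \mathcal{L}_{\mathrm{var}}(\Phi^{*}) + C$, i.e.\ $C = -\mathcal{L}_{\mathrm{var}}(\Phi^{*})$. Plugging this back gives the claimed identity. Note that Assumption \ref{assu:max-Phi} plays no role in this argument; only the SCAR Assumption \ref{assu:scar} is used, consistent with the hypotheses of the theorem.

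There is no deep obstacle here --- the computation is essentially bookkeeping --- so the care goes into well-definedness of the quantities involved. First I would confirm that the hypothesis $\mathbb{E}_{f}[\Phi(x)] > 0$ guarantees $f_{\Phi}$ is a bona fide probability density, so that the divergence is meaningful. The one genuinely delicate point is the term $\mathbb{E}_{f_{P}}[\log \Phi(x)]$: if $\Phi$ vanishes on a set of positive $f_{P}$-measure, this integral is $-\infty$ and $\mathcal{L}_{\mathrm{var}}(\Phi) = +\infty$, but then $f_{\Phi} = 0$ while $f_{P} > 0$ on that set, so $\mathrm{KL}(f_{P} \| f_{\Phi}) = +\infty$ as well and the identity still holds in the extended reals. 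I would state this convention explicitly and otherwise take the integrals to be finite, after which the splitting of the integrand used above is justified by linearity of the expectation.
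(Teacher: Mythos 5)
Your proposal is correct and follows essentially the same route as the paper's proof: both expand $\mathrm{KL}(f_P\|f_\Phi)$ using the definition of $f_\Phi$ and rely on the Bayes-rule identity $f_{\Phi^*}=f_P$ to account for the $\mathcal{L}_{\mathrm{var}}(\Phi^*)$ term, the only cosmetic difference being that you determine the $\Phi$-independent constant by evaluating the identity at the anchor $\Phi=\Phi^*$ rather than substituting $f_P=\Phi^{*}f/\mathbb{E}_{f}[\Phi^{*}]$ directly inside the logarithm as the paper does. Your added care about admissibility of $\Phi^*$ and about the degenerate case where $\Phi$ vanishes on a set of positive $f_P$-measure goes beyond the paper's own argument but does not change the approach.
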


Since the KL divergence is always nonnegative, $\mathcal{L}_{\mathrm{var}}(\Phi)$
provides a variational upper bound of $\mathcal{L}_{\mathrm{var}}(\Phi^{*})$,
which can be easily computed by empirical averages over sets $\mathcal{P},\mathcal{U}$,
and the KL divergence of $f_{P}$ from $f_{\Phi}$ can be minimized
by equivalently minimizing $\mathcal{L}_{\mathrm{var}}(\Phi)$ (see
Fig.~\ref{fig:variational} for illustration). As a result, by selecting
a regularization functional $\mathcal{L}_{\mathrm{reg}}$ (see Section
\ref{subsec:Regularized-learning-method}), parameters of $\Phi$
can be optimized by solving

\begin{equation}
\min_{\Phi}\mathcal{L}(\Phi)=\mathcal{L}_{\mathrm{var}}(\Phi)+\lambda\mathcal{L}_{\mathrm{reg}}(\Phi)\label{eq:L}
\end{equation}
subject to constraints $\Phi(x)\in[0,1]$ and $\max_{x}\Phi(x)=1$.
In what follows, we refer to such a method variational PU (VPU) learning.

\begin{figure}[t]
\begin{centering}
\includegraphics[scale=0.4]{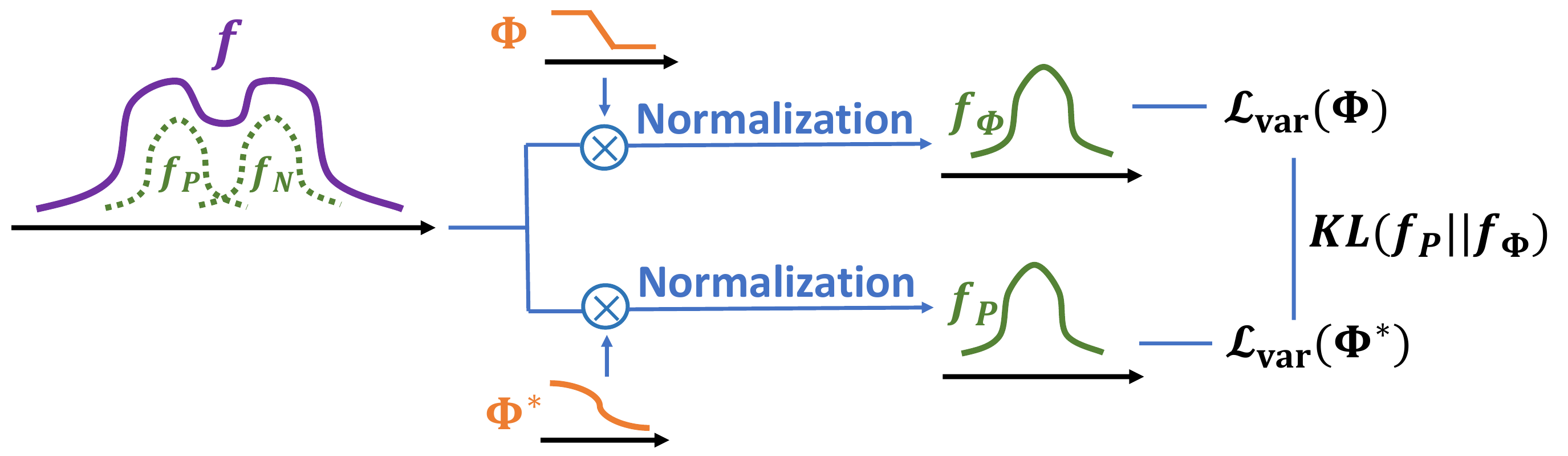}
\par\end{centering}
\caption{\label{fig:variational}Graphical interpretation of the variational
principle stated by Theorem \ref{thm:variational}, where $f_{P},f_{N},f$
denote distributions of positive, negative and unlabeled data. Each
classifier model $\Phi$ induces an approximation $f_{\Phi}$ of $f_{P}$
as in (\ref{eq:f_phi}), and $\mathrm{KL}(f_{P}||f_{\Phi})$ equals
to the difference between functionals $\mathcal{L}_{\mathrm{var}}(\Phi)$
and $\mathcal{L}_{\mathrm{var}}(\Phi^{*})$.}
\end{figure}

\begin{rem}
Theorem \ref{thm:variational} can also be interpreted as a corollary
to the Donsker-Varadhan representation theorem \cite{donsker1975asymptotic,belghazi2018mine}
by utilizing the variational representation of $\mathrm{KL}(f_{P}||f)$.
Based on the Donsker-Varadhan representation, objective functions
similar to $\mathcal{L}_{var}$ have been proposed to tackle various
problems, such as estimation of mutual information \cite{belghazi2018mine},
density ratio estimation \cite{tsuboi2009direct} and identification
of information-leaking features \cite{hsu2019obfuscation}.\end{rem}\begin{rem}

\label{rem:why_normalization}Although $\mathcal{L}_{\mathrm{var}}$
is scalar invariant with $\mathcal{L}_{\mathrm{var}}(c\cdot\Phi)=\mathcal{L}_{\mathrm{var}}(\Phi)$
for $c>0$ according to (\ref{eq:Lvar}), $\Phi^{*}$ can be uniquely
determined by the variational principle due to Assumption \ref{assu:max-Phi}
(see Section \ref{sec:Analysis-of-VPU} in Suppl.~Material).
\end{rem}

\subsection{Regularized learning method\label{subsec:Regularized-learning-method}}

The variational principle provides an asymptotically correct way to
model the classifier for PU learning in the limit of infinite sizes
of $\mathcal{P}$ and $\mathcal{U}$. However, in many application
scenarios, the size of labeled data is too small and the empirical
distribution cannot represent $f_{P}$. Therefore, simply minimizing
$\mathcal{L}_{\mathrm{var}}(\Phi)$ with a complex $\Phi$ may suffer
from overfitting and yield underestimation of $\Phi^{*}(x)$ for positive
but unlabeled data that are not close-neighbors of labeled data (see
analysis in Section \ref{sec:Analysis-of-VPU} of Suppl.~Material).

To overcome the above-mentioned issue of non-robustness, we incorporate
a \textit{MixUp} \cite{zhang2017mixup} based consistency regularization
term to the variational loss (\ref{eq:L}) as
\begin{equation}
\mathcal{L}_{\mathrm{reg}}(\Phi)=\mathbb{E}_{\tilde{\Phi},\tilde{x}}\left[\left(\log\tilde{\Phi}-\log\Phi(\tilde{x})\right)^{2}\right],\label{eq:reg}
\end{equation}
with
\begin{eqnarray}
\gamma & \stackrel{\mathrm{iid}}{\sim} & \mathrm{Beta}(\alpha,\alpha),\nonumber \\
\tilde{x} & = & \gamma\cdot x'+(1-\gamma)\cdot x'',\nonumber \\
\tilde{\Phi} & = & \gamma\cdot1+(1-\gamma)\cdot\Phi(x'').\label{eq:mixup}
\end{eqnarray}
Here $\tilde{x}$ is a sample generated by mixing randomly selected
$x'\in\mathcal{P}$ and $x''\in\mathcal{U}$, and $\tilde{\Phi}$
represents the \emph{guessed} probability $\mathbb{P}(y=+1|x=\tilde{x})$
constructed by the linear interpolation of the true label and that
predicted by $\Phi$. The consistency regularization is popular for
semi-supervised learning methods \cite{xie2019unsupervised,berthelot2019mixmatch},
and encourages smoothness of the model $\Phi$ especially in the area
between labeled and unlabeled data in VPU. Unlike in \cite{berthelot2019mixmatch},
here we perform MixUp between labeled and unlabeled samples, and quantify
the consistency between the predicted and interpolated $\Phi(\tilde{x})$
by the mean squared logarithmic error rather than the mean squared
error used in \cite{berthelot2019mixmatch}, because this scheme penalizes
more heavily the underestimation of $\Phi(\tilde{x})$ (see Section
\ref{sec:Analysis-of-VPU} in Suppl.~Material for detailed analysis).
The effectiveness of the proposed consistency regularization is validated
by our ablation study in Section \ref{subsec:ablation-study}. Finally,
it is noteworthy that some other regularization schemes without data
augmentation can also work well in the VPU framework (see, e.g., Section
\ref{subsec:Alternative-regularization-terms}).

A stochastic gradient based implementation of VPU with loss function
defined by (\ref{eq:L}) and (\ref{eq:reg}) is given in Algorithm
\ref{alg:VPU}, where regularization parameters $\lambda$ and $\alpha$
can be tuned by comparing the variational loss $\mathcal{L}_{\mathrm{var}}(\Phi)$
on the validation set.

\begin{algorithm}[tb]
\caption{Stochastic gradient based VPU}

\label{alg:VPU}

\begin{algorithmic}[1]

\STATE {\bfseries Input:} Positive and negative data sets $\ensuremath{\mathcal{P},\mathcal{U}}$,
a parametric model of $\Phi:\mathbb{R}^{d}\mapsto[0,1]$, hyperparameters
$\ensuremath{\lambda}$ and $\ensuremath{\alpha}$.

\REPEAT

\STATE Randomly sample mini-batches $\mathcal{B}^{\mathcal{P}}$
and $\ensuremath{\mathcal{\mathcal{B}^{\mathcal{U}}}}$ from $\ensuremath{\mathcal{P}}$
and $\ensuremath{\mathcal{U}}$ with batch size $B$.

\STATE Compute the variational loss by
\[
\hat{\mathcal{L}}_{var}=\log\frac{\sum_{x\in\ensuremath{\mathcal{\mathcal{B}^{\mathcal{U}}}}}\Phi(x)}{B}-\frac{\sum_{x\in\mathcal{B}^{\mathcal{P}}}\log\Phi(x)}{B}.
\]

\STATE Sample $\gamma\sim\text{Beta}\left(\alpha,\alpha\right)$,
and perform MixUp between labeled and unlabeled data by for $i=1,\cdots,B$
\begin{align*}
\tilde{x}_{i} & =\gamma x_{i}^{\mathcal{P}}+\left(1-\gamma\right)x_{i}^{\mathcal{U}}\\
\tilde{\Phi}_{i} & =\gamma+\left(1-\gamma\right)\Phi\left(x_{i}^{\mathcal{U}}\right),
\end{align*}
where $x_{i}^{\mathcal{P}}\in\mathcal{B}^{\mathcal{P}},x_{i}^{\mathcal{U}}\in\mathcal{\mathcal{B}^{\mathcal{U}}}$.

\STATE Compute the regularization term and the total loss by
\begin{align*}
\hat{\mathcal{L}}{}_{\mathrm{reg}} & =\frac{1}{B}\sum_{i=1}^{B}\left[\log\Phi\left(\tilde{x}_{i}\right)-\log\tilde{\Phi}_{i}\right]^{2},\\
\hat{\mathcal{L}} & =\hat{\mathcal{L}}+\lambda\hat{\ell}{}_{\mathrm{reg}}.
\end{align*}

\STATE Update parameters $W$ of $\Phi$ with step-sizes $\eta$
as
\[
W\leftarrow W-\eta\frac{\partial\hat{\mathcal{L}}}{\partial W}.
\]

\UNTIL The terminal condition is satisfied.

\STATE Perform the normalization
\[
\Phi(x)\leftarrow\min\left\{ \frac{\Phi(x)}{\max_{x\in\mathcal{P}\cup\mathcal{U}}\Phi(x)},1\right\} 
\]
according to  Remark \ref{rem:why_normalization}.

\end{algorithmic}

\end{algorithm}

\subsection{Comparison with related work}

From an algorithmic perspective, VPU is similar to the risk estimator
based PU learning methods, uPU and nnPU \cite{du2014analysis,kiryo2017positive}.
All the three methods optimize parameters of the classifier with respect
to some empirical estimates of a loss under the SCAR assumption, and
the major difference comes from the fact that by introducing Assumption
\ref{assu:max-Phi}, VPU can be implemented without the class prior
$\pi_{P}$. As analyzed in Section \ref{sec:Problem-setting} and
Section \ref{sec:Analysis-of-VPU} in Suppl.~Material, Assumption
2 comprises most practical cases where some instances are positive
with probability one, and most class prior estimation methods require
similar irreducibility assumptions for the identifiability of $\text{\ensuremath{\pi_{P}}}$.
It can also be proved that a slight relaxation of this assumption
will not significantly affect the asymptotic correctness of VPU (see
Theorem \ref{thm:bias}).

Furthermore, all hyperparameters in VPU, including regularization
parameters, model class and iteration number, can be determined by
$\mathcal{L}_{\mathrm{var}}$ based cross validation. But for uPU
and nnPU, because the estimated risks heavily rely on the class prior
$\pi_{P}$, choosing $\pi_{P}$ by the direct estimated risk based
cross validation will yield an uninformative result with $\pi_{P}=1$.
(See analysis in Section \ref{sec:Analysis-of-VPU} of Suppl.~Material.)

\section{Theoretical analysis \label{sec:Theoretical-analysis}}

The asymptotic correctness of VPU is a direct consequence of the variational
principle introduced in Section \ref{subsec:Variational-principle}
as shown in the following theorem.
\begin{thm}
\label{thm:correctness}Provided that the following conditions hold:
(i) Assumptions \ref{assu:scar} and \ref{assu:max-Phi} hold, (ii)
and the classifier is modeled as $\Phi(x)=\Phi(x,\theta)$ with parameters
$\theta$ and there exists $\theta^{*}$ so that $\Phi^{*}(x)=\Phi(x,\theta^{*})$.
Then the optimal $\Phi$ obtained by VPU satisfies $\Phi\stackrel{p}{\to}\Phi^{*}$
as $M,N\to\infty$ and $\lambda\to0$.
\end{thm}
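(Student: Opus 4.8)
The plan is to recognize the statement as a standard consistency result for a minimum-contrast (M-)estimator, with population objective $\mathcal{L}_{\mathrm{var}}$ and empirical objective $\hat{\mathcal{L}}$. The argument rests on three ingredients: identifiability (the population minimizer is unique and well-separated), uniform convergence of the empirical objective to the population one, and an argmin-continuity theorem that combines the two to transfer the minimizer.

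First I would establish that $\theta^*$ is the unique minimizer of the population variational loss over the constrained parameter set. By Theorem \ref{thm:variational}, $\mathcal{L}_{\mathrm{var}}(\Phi)-\mathcal{L}_{\mathrm{var}}(\Phi^*)=\mathrm{KL}(f_P\,\|\,f_\Phi)\geq 0$, with equality iff $f_\Phi=f_P$. The relation $f_\Phi=f_P$ forces $\Phi=c\,\Phi^*$ pointwise ($f$-a.e.) for some constant $c>0$ by (\ref{eq:f_phi}); Assumption \ref{assu:max-Phi} then pins the scale, since $\Phi^*=1$ on the positive-measure set $\mathcal{A}$, so the feasibility constraint $\max_x\Phi(x)=1$ yields $c=1$ and hence $\Phi=\Phi^*$. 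This is exactly the uniqueness asserted in Remark \ref{rem:why_normalization}. Combined with continuity of $\theta\mapsto\mathcal{L}_{\mathrm{var}}(\theta)$ and compactness of the parameter set, it gives the well-separation condition $\inf_{\|\theta-\theta^*\|\geq\epsilon}\mathcal{L}_{\mathrm{var}}(\theta)>\mathcal{L}_{\mathrm{var}}(\theta^*)$ for every $\epsilon>0$.

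Next I would show that the empirical objective converges uniformly to $\mathcal{L}_{\mathrm{var}}$ as $M,N\to\infty$ and $\lambda\to0$. The empirical variational loss replaces $\mathbb{E}_f$ and $\mathbb{E}_{f_P}$ by averages over $\mathcal{U}$ and $\mathcal{P}$, so a uniform law of large numbers (via a Glivenko--Cantelli or bracketing argument, using continuity of $\Phi(x,\theta)$ in $\theta$ together with an integrable envelope) yields $\sup_\theta|\hat{\mathcal{L}}_{\mathrm{var}}(\theta)-\mathcal{L}_{\mathrm{var}}(\theta)|\stackrel{p}{\to}0$. Since $\mathcal{L}_{\mathrm{reg}}$ is bounded on the parameter set, the penalty $\lambda\mathcal{L}_{\mathrm{reg}}$ vanishes uniformly as $\lambda\to0$, so the full objective converges uniformly to $\mathcal{L}_{\mathrm{var}}$. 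Invoking the standard argmin-consistency theorem then gives $\hat\theta\stackrel{p}{\to}\theta^*$, and continuity of $\Phi(\cdot,\theta)$ in $\theta$ delivers $\Phi\stackrel{p}{\to}\Phi^*$.

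The hard part will be controlling the logarithmic terms $\log\Phi$ and $\log\mathbb{E}_f[\Phi]$, which are unbounded as $\Phi\to0$, so the naive envelope required for the uniform law of large numbers fails. The plan is to argue that near-optimal classifiers cannot make $\Phi$ too small on the support of $f_P$, since otherwise $-\mathbb{E}_{f_P}[\log\Phi]$ diverges and $\mathcal{L}_{\mathrm{var}}$ is driven far above its minimum; one then restricts attention to the subset of parameters on which $\Phi$ is uniformly bounded below on that support, which is where all (near-)minimizers must lie. Alternatively, such a lower bound can be imposed explicitly, consistent with the normalization step in Algorithm \ref{alg:VPU}. A secondary subtlety is the joint behaviour of the two limits $M,N\to\infty$ and $\lambda\to0$: one must ensure the regularizer vanishes so that the perturbation it induces on the minimizer is $o_p(1)$, which follows directly from its uniform vanishing as $\lambda\to0$ stated above.
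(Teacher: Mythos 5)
Your proposal follows essentially the same route as the paper's own proof: identification of $\theta^*$ via the uniqueness result (Proposition \ref{prop:uniqueness}, i.e., scale invariance pinned down by Assumption \ref{assu:max-Phi} and the constraint $\max_x\Phi(x)=1$), convergence of the empirical variational loss to $\mathcal{L}_{\mathrm{var}}$ with the regularizer vanishing as $\lambda\to 0$, and the standard extremum-estimator consistency theorem (the paper cites Theorem 2.1 of Newey--McFadden, which is exactly the argmin-consistency result you invoke). In fact your write-up is more careful than the paper's, which only states pointwise (not uniform) convergence and never addresses the unboundedness of the $\log\Phi$ terms that you correctly flag as the technical crux.
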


We now analyze the effects of relaxation of Assumptions \ref{assu:scar}
and \ref{assu:max-Phi} on VPU.
\begin{assumption}
\label{assu:separability}$\mathcal{P}\stackrel{\mathrm{iid}}{\sim}f_{P}^{\prime}$,
$\mathcal{U}\stackrel{\mathrm{iid}}{\sim}f$, where $f_{P}^{\prime}$
differs from the positive data distribution $f_{P}$ in $\mathcal{U}$,
and $f_{P}^{\prime},f_{P}$ satisfy (i) there are positive constants
$c_{1},c_{2}$ close to $1$ so that $c_{1}f_{P}(x)\le f_{P}^{\prime}\le c_{2}f_{P}(x)$
and (ii) there is a set $\mathcal{A}\subset\mathbb{R}^{d}$ with $\int_{\mathcal{A}}f_{P}(x)\mathrm{d}x>0$
so that $\min_{x\in\mathcal{A}}\Phi^{*}(x)\ge1-\epsilon$ with $\epsilon\in[0,1)$
being a small number.
\end{assumption}

\begin{thm}
\label{thm:bias}If data distributions satisfy Assumption \ref{assu:separability},
the optimal $\Phi$ obtained by VPU with $\lambda=0$ and $M,N\to\infty$
satisfies
\[
\left|\mathcal{R}(\Phi)-\mathcal{R}(\Phi^{*})\right|\le\max\left\{ \frac{c_{2}}{c_{1}}-1,1-\frac{c_{1}(1-\epsilon)}{c_{2}}\right\} ,
\]
where $\mathcal{R}(\Phi)$ denotes the misclassification rate of the
predicted label $y=\mathrm{sign}(\Phi(x)-0.5)$.
\end{thm}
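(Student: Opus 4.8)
The plan is to work in the population limit and proceed in three stages: first identify the limiting VPU classifier as a normalized density ratio $f_P'/f$; second, sandwich it between two multiples of $\Phi^*$ using the two-sided density bound together with the relaxed positivity condition; and third, feed this sandwich into the standard excess-risk identity for the $0$-$1$ loss. Since the Bayes classifier minimizes misclassification rate we have $\mathcal{R}(\Phi)\ge\mathcal{R}(\Phi^*)$, so the absolute value can be dropped from the start.

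First I would characterize the limiting minimizer. With $\lambda=0$ and $M,N\to\infty$, the empirical variational loss converges to the population loss $\mathcal{L}_{\mathrm{var}}'(\Phi)=\log\mathbb{E}_f[\Phi]-\mathbb{E}_{f_P'}[\log\Phi]$, where $f_P'$ replaces $f_P$ because now $\mathcal{P}\sim f_P'$. Repeating the algebra behind Theorem \ref{thm:variational} with $f_P'$ in place of $f_P$ gives $\mathcal{L}_{\mathrm{var}}'(\Phi)=\mathrm{KL}(f_P'\|f_\Phi)-\mathrm{KL}(f_P'\|f)$, where $f_\Phi=\Phi f/\mathbb{E}_f[\Phi]$ as in (\ref{eq:f_phi}). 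The second term is constant in $\Phi$, so minimizing $\mathcal{L}_{\mathrm{var}}'$ is equivalent to driving $\mathrm{KL}(f_P'\|f_\Phi)$ to zero, i.e. $f_\Phi=f_P'$, which forces $\Phi\propto f_P'/f$; one checks this target is attainable within the constraint $\Phi\in[0,1]$. The normalization $\max_x\Phi=1$ of Algorithm \ref{alg:VPU} then fixes the scale, so the limiting classifier is $\Phi(x)=\frac{f_P'(x)/f(x)}{S}$ with $S=\sup_z f_P'(z)/f(z)$.

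Next I would convert this into a pointwise comparison with $\Phi^*$. Writing $\Phi^*(x)=\pi_P f_P(x)/f(x)$ via Bayes' rule gives $\Phi(x)/\Phi^*(x)=(f_P'(x)/f_P(x))/(S\pi_P)$. The two-sided bound $c_1\le f_P'/f_P\le c_2$ controls the numerator, and applying the same bound inside $S=\sup_z (f_P'/f_P)(\Phi^*/\pi_P)$ gives $S\pi_P\in[c_1 m,\,c_2 m]$ with $m=\sup_z\Phi^*(z)$. The relaxed Assumption \ref{assu:separability}(ii) supplies $m\in[1-\epsilon,1]$: the lower bound because $\int_{\mathcal{A}}f_P>0$ forces points of $\mathcal{A}$ to carry positive $f$-mass (as $f\ge\pi_P f_P$) where $\Phi^*\ge 1-\epsilon$, and the upper bound because $\Phi^*\le 1$ always. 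Combining these yields the key sandwich
\[
\frac{c_1}{c_2}\,\Phi^*(x)\;\le\;\Phi(x)\;\le\;\frac{c_2}{c_1(1-\epsilon)}\,\Phi^*(x),
\]
in which $\pi_P$ has cancelled, consistent with the class-prior-free spirit of the method.

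Finally I would invoke the standard identity $\mathcal{R}(\Phi)-\mathcal{R}(\Phi^*)=\int_{D}|2\Phi^*(x)-1|\,f(x)\,\mathrm{d}x$, where $D$ is the set on which the thresholded predictions of $\Phi$ and $\Phi^*$ disagree. On $D$ there are two cases. If $\Phi^*>\tfrac12\ge\Phi$, the lower sandwich gives $\Phi^*\le\frac{c_2}{2c_1}$, hence $|2\Phi^*-1|\le\frac{c_2}{c_1}-1$; if $\Phi^*\le\tfrac12<\Phi$, the upper sandwich gives $\Phi^*>\frac{c_1(1-\epsilon)}{2c_2}$, hence $|2\Phi^*-1|\le 1-\frac{c_1(1-\epsilon)}{c_2}$. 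Thus $|2\Phi^*-1|$ is at most the stated maximum on $D$, and bounding $\int_D f\le 1$ finishes the proof. The step needing the most care is the second stage, namely threading the normalization constant $S$ and the supremum $m$ through the relaxed positivity assumption so that $\pi_P$ cancels cleanly and $\epsilon$ enters only via $m\ge 1-\epsilon$; the convergence of the empirical argmin to the population minimizer in the first stage is the same consistency argument already behind Theorem \ref{thm:correctness}.
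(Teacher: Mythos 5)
Your proposal is correct and follows essentially the same route as the paper's own proof: identify the population-limit minimizer as the normalized ratio $f_{P}^{\prime}/f$ (the paper's Proposition \ref{prop:bias-distribution}), derive the identical sandwich $\frac{c_{1}}{c_{2}}\Phi^{*}\le\Phi\le\frac{c_{2}}{c_{1}(1-\epsilon)}\Phi^{*}$ by tracking the normalization constant, and then bound the pointwise excess $0$--$1$ risk on the disagreement set. Your bookkeeping via $S\pi_{P}$ and $m=\sup_{x}\Phi^{*}(x)$ is algebraically the same as the paper's via $Z$, and your explicit remark that $\mathcal{R}(\Phi)\ge\mathcal{R}(\Phi^{*})$ (so the absolute value can be dropped) makes rigorous a step the paper leaves implicit.
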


Selection bias is a practically important but theoretically challenging
classification problem for VPU, which implies that the labeled data
distribution $f_{P}^{\prime}$ may differ from the positive data distribution
$f_{P}$ \cite{bekker2018bias,kato2018learning}. Although the variational
principle in this case requires further investigations, Theorem \ref{thm:bias}
ensures that the VPU learning can still obtain a classification accuracy
comparable to the ideal $\Phi^{*}$, i.e., $\mathcal{R}(\Phi)\approx\mathcal{R}(\Phi^{*})$,
if the selection bias is limited with $c_{1},c_{2}$ close to $1$
and Assumption \ref{assu:separability} is only slightly violated
with $\epsilon\ll1$.Our numerical experiments also indicate that
the proposed VPU is quite robust to the bias of labeled data (see
Section \ref{subsec:selection-bias-experiments}).

\section{Experiments}

In this section, we test the effectiveness of VPU on both synthetic
and real-world datasets. We provide an extensive ablation study to
analyze the regularization defined by (\ref{eq:reg}). Considering
selection bias is common in practice, we test the effectiveness of
VPU and existing methods in this scenario. At last, we further demonstrate
the robustness of VPU by experiments with different size of the labeled
set.

\subsection{Implementation details}

The class label is predicted as $y=\mathrm{sign}(\Phi(x)-0.5)$ in
VPU when calculating classification accuracies. In all experiments,
$\alpha$ is chosen as $0.3$ and\textcolor{red}{{} }$\lambda\in\left\{ 1e-4,3e-4,1e-3,\cdots,1,3\right\} $
is determined by holdout validation unless otherwise specified. We
use Adam as the optimizer for VPU with hyperparameters $(\text{\ensuremath{\beta_{1}}},\beta_{2})=(0.5,0.99)$.

The performance of VPU is compared to that of some recently developed
PU learning methods, including the unbiased risk estimator based uPU
and nnPU \cite{du2014analysis,kiryo2017positive}, the generative
model based GenPU \cite{hou2017generative}, and the rank pruning
(RP) proposed in \cite{northcutt2017learning}.\footnote{The software codes are downloaded from \url{https://github.com/kiryor/nnPUlearning},
\url{https://qibinzhao.github.io/index.html} and \url{https://github.com/cgnorthcutt/rankpruning}.} Notice that uPU and nnPU require the prior knowledge of the class
proportion. Thus, for fair comparison, $\pi_{P}$ is estimated by
the KM2 method proposed in \cite{ramaswamy2016mixture} when implementing
uPU and nnPU, where KM2 is one of the state-of-the-art class prior
estimation algorithms. For GenPU, the hyperparameters of the algorithm
are determined by greedy grid search as described in Section \ref{sec:Experiments-details}
in Suppl.~Material.

In all the methods, the classifiers (including discriminators of GenPU)
are modeled by $7$-layer MLP for UCI datasets, LeNet-5 \cite{lecun1998gradient}
for FashionMNIST and $7$-layer CNN for CIFAR-10 and STL-10. By default,
the accuracies are evaluated on test sets and the mean and standard
deviation values are computed from $10$ independent runs. All the
other detailed settings of datasets and algorithms are provided in
Section \ref{sec:Experiments-details} of Suppl.~Material, and the
software code for VPU is also available\footnote{\url{https://github.com/HC-Feynman/vpu}}.

\subsection{Benchmark data \label{subsec:UCI-experiments}}

We conduct experiments on three benchmark datasets taken from the
UCI Machine Learning Repository \cite{Dua:2019,de2018reliable}, and
the classification results are reported in Table \ref{tab:UCI}. It
can be seen that VPU outperforms the other methods with high accuracies
and low variances on almost all the datasets. nnPU and uPU suffer
from the estimation error of $\pi_{P}$. In fact, if $\pi_{P}$ is
exactly given, nnPU can achieve better performance, though still a
little worse than VPU. (See Section \ref{sec:Experiments-details}
in Suppl.~Material.) In addition, RP interprets unlabeled data as
noisy negative data and can get an accurate classifier when the proportion
of positive data is small in unlabeled data. But in the opposite case
where the proportion is too large, RP performs even worse than random
guess. ($\pi_{P}$ = 0.896 and 0.635 in Page Blocks with\textcolor{black}{{}
'text' vs 'horizontal line, vertical line, picture, graphic} and Grid
Stability with \textquoteright unstable\textquoteright{} vs \textquoteright stable\textquoteright .)

\begin{table}[tbh]
\caption{\label{tab:UCI}Classification accuracies (\%) of compared methods
on UCI datasets. Definitions of labels ('Positive' vs 'Negative')
are as follows: Page Blocks$^1$: \textcolor{black}{'horizontal line
, vertical line, picture, graphic' vs 'text'. Page Blocks$^2$: 'text'
vs 'horizontal line , vertical line, picture, graphic'.} Grid Stability$^1$:
'stable' vs 'unstable'. Grid Stability$^2$: 'unstable' vs 'stable'.
Avila$^1$: 'A' vs the rest. Avila$^2$: 'A, F' vs the rest. Labeled
positive data are randomly selected from the training data with $M=100,1000,2000$
and $N=3284,6000,10430$.}

\begin{center}
\footnotesize
\begin{tabular}{cccccccccc}  
\toprule  
Dataset  
& Page Blocks$^1$ & Page Blocks$^2$ & Grid Stability$^1$&  Grid Stability$^2$ & Avila$^1$& Avila$^2$ \\
\midrule 
\footnotesize
VPU &
\scriptsize \bm{$93.6\pm0.4$} & \scriptsize \bm{$93.5\pm0.7}$ & \scriptsize \bm{$92.6\pm0.3}$ &  \scriptsize \bm{$89.5\pm0.5$} & \scriptsize \bm{$82.0\pm0.9$} & \scriptsize \bm{$87.2\pm0.5$} \\
\midrule 
\footnotesize
nnPU &
\scriptsize $93.4\pm1.1$ & \scriptsize$90.2\pm2.6$ &\scriptsize$80.8\pm2.5$ &\scriptsize$84.1\pm1.8$ &\scriptsize$73.3\pm2.0$ &\scriptsize $83.1\pm2.1$ \\
\midrule
\footnotesize
uPU &
\scriptsize $92.8\pm1.3$ &\scriptsize$86.8\pm4.7$ &\scriptsize$\bm{92.6\pm0.7}$ &\scriptsize$86.8\pm0.5$ &\scriptsize$75.0\pm0.4$ &\scriptsize$82.7\pm1.7$\\
\midrule
\footnotesize
GenPU &
\scriptsize $93.2\pm0.3$ &\scriptsize$90.2\pm0.1$ &\scriptsize$69.3\pm0.6$  &\scriptsize$75.6\pm1.8$ &\scriptsize$63.4\pm1.1$  &\scriptsize$67.1\pm0.8$\\
\midrule
\footnotesize
RP 
&\scriptsize $91.2\pm1.4$ &\scriptsize$9.96\pm0.7$ &\scriptsize$84.7\pm1.3$ &\scriptsize$36.7\pm0.6$ &\scriptsize$75.8\pm0.4$ &\scriptsize$77.2\pm0.2$\\
\bottomrule 
\end{tabular}
\end{center}
\end{table}

\subsection{Image datasets\label{subsec:Image-datasets}}

Here we compare all the methods on three image datasets: FashionMNIST,
CIFAR-10, and STL-10. Notice that in the rest of the paper, we denote
the $10$ classes of each image datasets with integers ranging from
$0$ to $9$, following the default settings in torchvision 0.5.0
(see Section \ref{sec:Experiments-details} in Suppl. Material).\footnote{Datasets are downloaded from \url{https://github.com/zalandoresearch/fashion-mnist},
\url{https://www.cs.toronto.edu/~kriz/cifar.html} and \url{http://cs.stanford.edu/~acoates/stl10}.}The classification accuracies are collected in Table \ref{tab:image data},
in which the superiority of VPU is also marked  (see Section \ref{subsec:Other-metric-for}
for other comparison metric). Here uPU performs much worse than nnPU
due to the overfitting problem \cite{kiryo2017positive}. Moreover,
the performance of GenPU is also not satisfying because of the mode
collapse of generators (see Section \ref{sec:Experiments-details}
in Suppl.~Material).

\begin{table}[tbh]
\caption{\label{tab:image data}Classification accuracies (\%) of compared
methods on FashionMNIST (abbreviated as ``F-MNIST''), CIFAR-10 and
STL-10 datasets. Definitions of labels ('Positive' vs 'Negative')
are as follows: FashionMNIST$^1$: '1,4,7' vs '0,2,3,5,6,8,9'. FashionMNIST$^2$:
'0,2,3,5,6,8,9' vs '1,4,7'. CIFAR-10$^1$: '0,1,8,9' vs '2,3,4,5,6,7'.
CIFAR-10$^2$: '2,3,4,5,6,7' vs '0,1,8,9'. STL-10$^1$: '0,2,3,8,9'
vs '1,4,5,6,7'. STL-10$^2$: '1,4,5,6,7' vs '0,2,3,8,9'. For FashionMNIST
and CIFAR-10, labeled positive data are randomly selected from the
training data with $M=3000$. For STL-10, $\mathcal{P}$ are defined
as all positive labeled data in the training set with $M=2500$.}

\begin{center}
\begin{tabular}{cccccccc} 
\toprule
Dataset  & F-MNIST$^1$ & F-MNIST$^2$ & CIFAR-10$^1$ & CIFAR-10$^2$ & STL-10$^1$ & STL-10$^2$ \\
\midrule 
\footnotesize
VPU & \scriptsize \bm{$92.7\pm0.3$} & \scriptsize \bm{$90.8\pm0.6$} & \scriptsize\bm{$89.5\pm0.1$} & \scriptsize\bm{$88.8\pm0.8$} & \scriptsize\bm{$79.7\pm1.5$} & \scriptsize \bm{$83.7\pm0.1$} \\
\midrule 
\footnotesize
nnPU & \scriptsize$90.8\pm0.6$ & \scriptsize$ 90.5\pm0.4$ & \scriptsize$85.6\pm2.3$ & \scriptsize$85.5\pm2.0$ & \scriptsize $78.3\pm1.2$ & \scriptsize$82.2\pm0.5$ \\
\midrule 
\footnotesize
uPU & \scriptsize$89.9\pm1.0$ & \scriptsize$78.6\pm1.3$ &\scriptsize$80.6\pm2.1$ & \scriptsize$72.9\pm3.2$ & \scriptsize$70.3\pm2.0$ & \scriptsize$74.0\pm3.0$\\
\midrule 
\footnotesize
Genpu & \scriptsize$47.8\pm1.0$ & \scriptsize$78.8\pm0.3$ & \scriptsize$67.6\pm0.9$ & \scriptsize$72.1\pm1.1$ & \scriptsize$65.1\pm1.0$ & \scriptsize$68.1\pm1.3$\\
\midrule 
\footnotesize
RP & \scriptsize$92.2\pm0.4$ & \scriptsize$75.9\pm0.6$ & \scriptsize$86.7\pm2.9$ & \scriptsize$77.8\pm2.5$ & \scriptsize$67.8\pm4.6$ & \scriptsize$68.5\pm5.7$ \\
\bottomrule 
\end{tabular}
\end{center}
\end{table}

\subsection{Ablation study\label{subsec:ablation-study}}

To justify our choice for the regularization term (\ref{eq:reg}),
we conduct an ablation study on FashionMNIST with '1, 4, 7' as positive
labels and $1000$ labeled samples. We compare (a) consistency regularization
(\ref{eq:reg}) adopted in this paper with $x'\in\mathcal{P}$ and
$x''\in\mathcal{U}$ as in (\ref{eq:mixup}), (b) $\mathcal{L}_{\mathrm{reg}}(\Phi)\equiv0$,
(c) regularization with MixUp on $\mathcal{P}$ data only, (d) regularization
with MixUp on $\mathcal{P}\cup\mathcal{U}$, where $x',x''$ are both
randomly selected from $\mathcal{P}\cup\mathcal{U}$, (e) consistency
loss defined by the mean squared error $\mathbb{E}_{\tilde{\Phi},\tilde{x}}[\left(\tilde{\Phi}-\Phi(\tilde{x})\right){}^{2}]$.
Results in Table \ref{tab:ablation-study} show the superiority of
(\ref{eq:reg}).

\begin{table}

\caption{\label{tab:ablation-study}Ablation study results on FashionMNIIST
with '1, 4, 7' as positive labels and 1000 labeled samples. (\ref{eq:reg})
is the regularization term we adopt, i.e., mean squared logarithmic
error with MixUp between $\mathcal{P}$ and $\mathcal{U}$.}

\begin{center}
\begin{tabular}{cccccc}
\toprule
\tabincell{c}{Ablation on\\regularization} & (\ref{eq:reg}) &\tabincell{c}{no\\regularization} &\tabincell{c}{(\ref{eq:reg}) with MixUp \\on $\mathcal{P}$ only} &  \tabincell{c}{(\ref{eq:reg}) with MixUp\\ on $\mathcal{P}\cup\mathcal{U}$ }& \tabincell{c}{(\ref{eq:reg})\\with MSE}\\
\midrule
Test accuracies& \scriptsize$91.3\pm0.4$  & \scriptsize$87.2\pm2.9$  & \scriptsize$90.3\pm0.3$  & \scriptsize$90.1\pm0.9$ &\scriptsize $90.0\pm0.2$ 
\\
\bottomrule
\end{tabular}
\end{center}
\end{table}

\subsection{Selection bias\label{subsec:selection-bias-experiments}}

In many practical situations, the assumption that the empirical distribution
of $f_{P}$ is consistent with the ground truth may not be satisfied.
Hence, in this section we compare the PU methods in Section \ref{subsec:UCI-experiments}
on FashionMNIST with '1, 4, 7' as positive labels under selection
bias of $\mathcal{P}$. In this experiment, the total number of labeled
data is fixed to $3000$, but selection among different positive labels
is biased. For positive labels '1, 4, 7', we denote corresponding
numbers of labeled data as $n_{1},n_{4},n_{7}$, which satisfy $n_{1}+n_{4}+n_{7}=3000$
and $n_{4}=n_{7}\le n_{1}$. (Note the three classes have the same
size in the whole data set.) Performance of the methods is compared
in Fig. \ref{fig:bias}, which shows that VPU has a superior robustness
to sample selection bias of $\mathcal{P}$ over other methods.\textcolor{red}{{}
}\textcolor{black}{Poor performance of nnPU is, to a large extent,
attributed to the difficulties of class prior estimation under selection
bias, and nnPU is robust if the accurate class prior is known (See
}Section \ref{sec:Experiments-details}\textcolor{black}{{} in Suppl.}~\textcolor{black}{Material).}

\subsection{Different size of the labeled set}

Considering that a big labeled positive set is usually inaccessible
in applications, we investigate performance of the PU methods with
small labeled set on FashionMNIST with '1, 4, 7' as positive labels.
The labeled set size ranges from $500$ to $3000$, and Figure \ref{fig:diff_size}
shows the robustness of VPU.

\begin{figure}[t]
  \begin{minipage}[l]{\textwidth}
    \begin{minipage}[t]{0.48\textwidth}
      \centering
      \centerline{\includegraphics[width=\columnwidth]{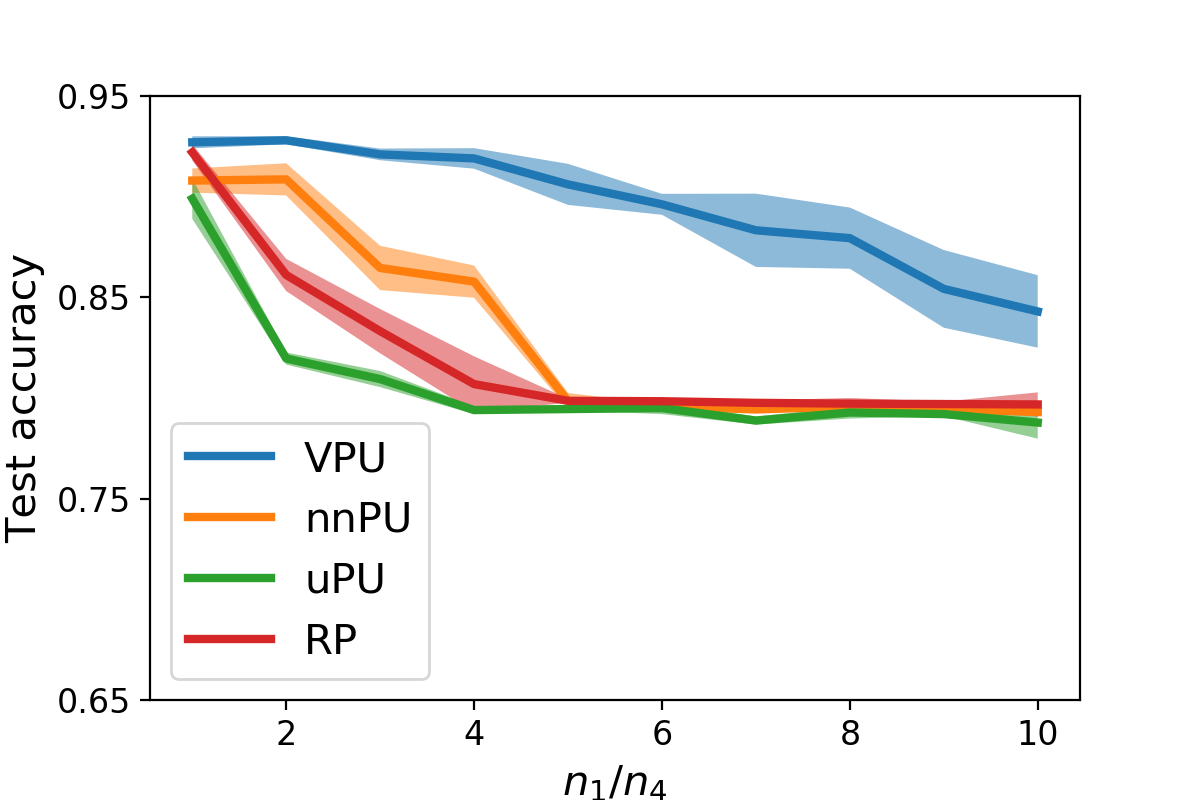}}
      \captionof{figure}{Test accuracies of PU methods on FashionMNIST with '1, 4, 7' as positive labels.  $n_1,n_4,n_7$   denote corresponding numbers of labeled samples for each label, with  $n_1+n_4+n_7=3000$ and $n_4=n_7$.}
      \label{fig:bias}
    \end{minipage}\hfill
    \begin{minipage}[t]{0.48\textwidth}
      \centering
      \centerline{\includegraphics[width=\columnwidth]{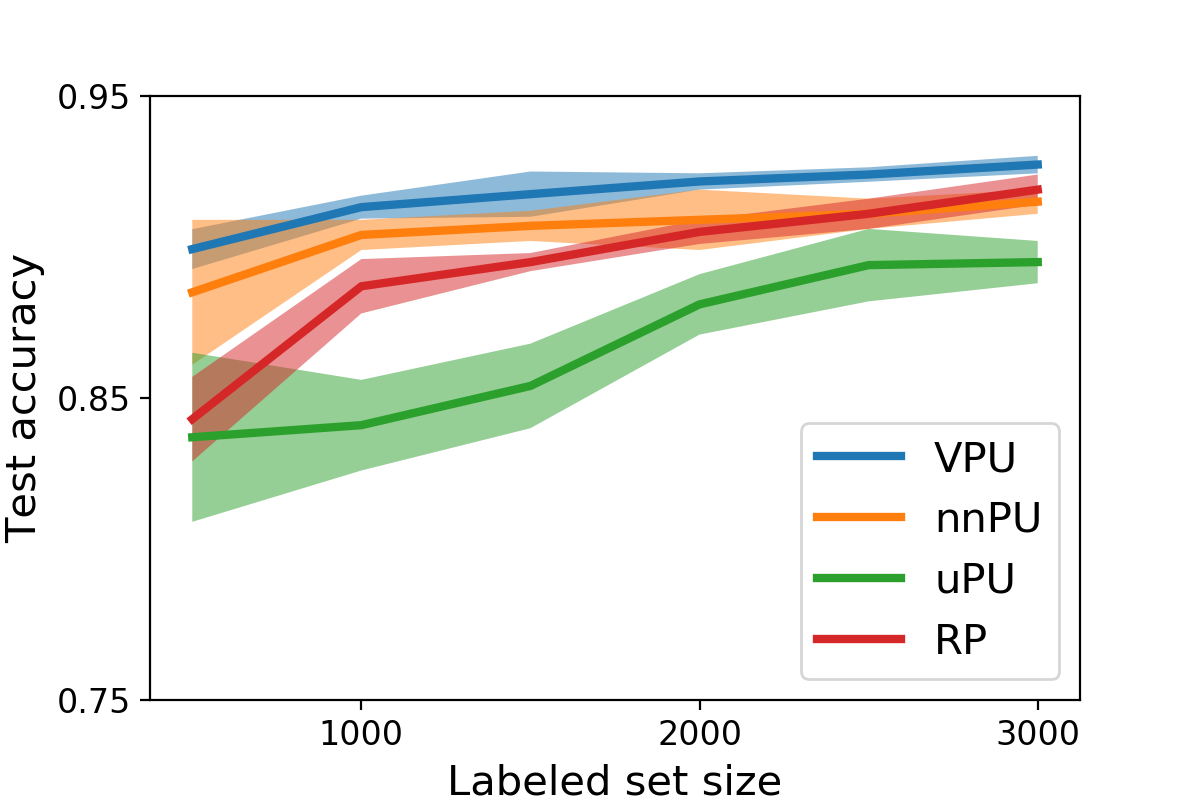}}
      \captionof{figure}{Test accuracies of PU methods on FashionMNIST with '1, 4, 7' as positive labels, with different size of the labeled set.}
    \label{fig:diff_size}
\end{minipage}
  \end{minipage}

\end{figure}

\section{Conclusion}

In this work, we proposed a novel variational principle for PU learning,
and developed an efficient learning method called variational PU (VPU).
In addition, a MixUp based regularization was utilized to improve
the stability of the method. We also showed that the method can consistently
estimate the optimal Bayesian classifier under a general condition
without any assumption on class prior or data separability. The superior
performance and robustness of VPU was confirmed in the experiments.

It is worthy to note that variational principle could be extended
to a more general framework by using different statistical distances,
and some other possible variational principles are discussed in Section
\ref{sec:Extension} of Suppl.~Material. Many advanced techniques
developed for measuring difference between distributions for GAN can
be expected to improve the performance of VPU.

\FloatBarrier

\section*{Broader Impact}

VPU is a general framework for PU learning, and it overcomes some
limitations of previous methods, including requirement of class prior
known beforehand and data separability, so is more applicable to real-world
applications. Thus discussion of the potential impacts of VPU actually
leads to the discussion of potential impacts of applications of PU
learning itself. With VPU, less labels are needed, which saves cost
and improves efficiency. Moreover, VPU is able to mine the negative
pattern that is missing in the PU datasets. This will be helpful if
finding out the negative pattern is beneficial, such as discovering
drugs for diseases and identifying deceptive reviews for recommendation
systems. However, malicious tasks can also be conducted with VPU,
such as discovery of harmful chemical substance. Another unethical
scenario is that sometimes the negative pattern could be hidden on
purpose for the sake of privacy or other ethical considerations, but
with VPU, people might be able to find out about the hidden information.

\section*{Acknowledgments and Disclosure of Funding}

The authors thank the anonymous NeurIPS reviewers for their valuable
feedback. Hao Wu is supported by the Fundamental Research Funds for
the Central Universities, China (No. 22120200276). Yin Wang is supported
by National Natural Science Foundation of China (No. 61950410614)
and Cross-disciplinary Program for the Central Universities, China
(No. 08002150042).

\bibliographystyle{ieeetr}
\bibliography{ref}

\clearpage

\appendix

\part*{Supplementary Material}

\section{Analysis of VPU\label{sec:Analysis-of-VPU}}

\subsection{Class prior estimation in uPU and nnPU}

The risk estimator in uPU \cite{du2014analysis,du2015convex} is defined
by (\ref{eq:unbiased-risk}), and nnPU \cite{kiryo2017positive} provides
a nonnegative estimator 
\begin{equation}
\mathrm{risk}(\Phi)=\pi_{P}\mathbb{E}_{\text{labeled data}}\left[\ell_{+}\left(\Phi(x)\right)\right]+\max\left\{ 0,\mathbb{E}_{\text{unlabeled data}}\left[\ell_{-}\left(\Phi(x)\right)\right]-\pi_{P}\mathbb{E}_{\text{labeled data}}\left[\ell_{-}\left(\Phi(x)\right)\right]\right\} \label{eq:nn-risk}
\end{equation}
in order to avoid overfitting, where the classifier $\Phi$ is not
necessarily an approximate Bayesian classifier and its range can be
$\mathbb{R}$. Both (\ref{eq:unbiased-risk}) and (\ref{eq:nn-risk})
consistently estimate the misclassification risk
\[
\pi_{P}\mathbb{E}_{f_{P}}\left[\ell_{+}\left(\Phi(x)\right)\right]+(1-\pi_{P})\mathbb{E}_{f_{N}}\left[\ell_{-}\left(\Phi(x)\right)\right]
\]
under Assumption \ref{assu:scar}, where $f_{N}$ denotes negative
distribution $\mathbb{P}(x|y=-1)$. In usual cases, loss functions
$\ell_{+}$ and $\ell_{-}$ satisfy \cite{kiryo2017positive}

\begin{enumerate}

\item $\ell_{+}\left(\Phi(x)\right)\ge0$ and $\ell_{-}\left(\Phi(x)\right)\ge0$
for all $x$.

\item $\ell_{+}\left(\Phi(x)\right)\to0$ as $\Phi(x)\to C$ for
some constant $C$, where $C$ can be $\infty$. This implies the
loss is zero if $\Phi$ classify a positive sample $x$ as positive
with a high confidence.

\end{enumerate}

If we minimize the estimated risk by regarding $\pi_{P}$ as a variable,
a trivial minimum of $0$ can be achieved with $\pi_{P}=1$ and $\Phi(x)\equiv C$
in the limit of infinite data size, i.e., all data are predicted as
positive. achieves a trivial minimum of $0$ with $\pi_{P}=1$ and
$\Phi\left(x\right)\equiv C$, i.e., unlabeled data are predicted
as positive. This result is obviously uninformative. Moreover, it
is also infeasible to select $\pi_{P}$ as a hyperparameter by the
estimated risk based cross validation, since the minimal estimated
risk on validation set can also be obtained with $\pi_{P}=1$ and
$\Phi\left(x\right)\equiv C$. Therefore, unless some negative samples
are available as validation data, the class prior estimation is an
unavoidable intermediate step when performing uPU or nnPU.

\subsection{\textit{\emph{Assumption \ref{assu:max-Phi} and irreducibility}}
assumption \label{subsec:Assumption-and-irreducibility}}

According to Assumption \ref{assu:scar}, the unlabeled data distribution
can be decomposed as
\[
f=\pi_{P}\cdot f_{P}+(1-\pi_{P})\cdot f_{N},
\]
where $f_{N}=\mathbb{P}(x|y=-1)$, and it can be rewritten as
\[
f=\pi_{P}^{\prime}\cdot f_{P}+(1-\pi_{P}^{\prime})f_{N}^{\prime}
\]
with
\begin{eqnarray*}
\pi_{P}^{\prime} & = & c\pi_{P},\\
f_{N}^{\prime} & = & \frac{\left(1-c\right)\pi_{P}\cdot f_{P}+(1-\pi_{P})\cdot f_{N}}{1-c\pi_{P}},
\end{eqnarray*}
for all $c\in(0,1)$. This implies that $f_{N}$ and $\pi_{P}$ cannot
be uniquely determined from $f,f_{P}$ if $f_{N}$ is a mixture distribution
which contains $f_{P}$. In order to deal with this problem, most
class prior estimation methods \cite{jain2016nonparametric,christoffel2016class,du2017class,bekker2018estimating}
assume that $f_{N}$ is irreducible with respect to $f_{P}$, i.e.,
if $f_{N}$ is not a mixture containing $f_{P}$ \cite{SI-ref-33}.
One stronger variant of the irreducibility assumption is \cite{SI-ref-34,SI-ref-35}
\begin{equation}
\min_{\mathcal{A}\subset\mathbb{R}^{d},\int_{\mathcal{A}}f_{P}(x)\mathrm{d}x>0}\frac{f_{N}(x)}{f_{P}(x)}=0.\label{eq:anchor}
\end{equation}

We now show that \textit{\emph{Assumption \ref{assu:max-Phi} is equivalent
to (\ref{eq:anchor}).}}
\begin{prop}
Assumption \ref{assu:max-Phi} is satisfied if and only if (\ref{eq:anchor})
holds.
\end{prop}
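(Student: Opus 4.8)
The plan is to collapse both conditions onto a single pointwise statement about the likelihood ratio $f_N/f_P$, using Bayes' rule, and then translate between the two set-based formulations. First I would combine the SCAR decomposition $f=\pi_P f_P+(1-\pi_P)f_N$ with $\Phi^*(x)=\mathbb P(y=+1\mid x)=\pi_P f_P(x)/f(x)$ to obtain, on $\{f>0\}$,
\begin{equation}
1-\Phi^*(x)=\frac{(1-\pi_P)f_N(x)}{\pi_P f_P(x)+(1-\pi_P)f_N(x)}.
\end{equation}
Under the standing assumption $0<\pi_P<1$ (so that $f_N$ is defined and both mixture weights are strictly positive), this identity yields, for every $x$ with $f_P(x)>0$ (whence $f(x)\ge\pi_P f_P(x)>0$), the chain of equivalences $\Phi^*(x)=1\iff f_N(x)=0\iff f_N(x)/f_P(x)=0$. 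This pointwise equivalence is the engine of the entire proof; once it is in place both implications are immediate.

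For the direction Assumption \ref{assu:max-Phi} $\Rightarrow$ (\ref{eq:anchor}), I would take the set $\mathcal{A}$ furnished by Assumption \ref{assu:max-Phi}, with $\int_{\mathcal{A}}f_P>0$ and $\Phi^*\equiv1$ on $\mathcal{A}$, and pass to $\mathcal{A}_+=\mathcal{A}\cap\{f_P>0\}$. Since $\int_{\mathcal{A}}f_P=\int_{\mathcal{A}_+}f_P>0$, the set $\mathcal{A}_+$ still carries positive $f_P$-mass, and by the equivalence above $f_N/f_P=0$ throughout $\mathcal{A}_+$; hence the minimum in (\ref{eq:anchor}) is attained and equals $0$. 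Conversely, reading (\ref{eq:anchor}) as the existence of a set $\mathcal{A}$ with $\int_{\mathcal{A}}f_P>0$ on which the ratio $f_N/f_P$ vanishes, the same equivalence gives $\Phi^*\equiv1$ on (the positive-mass part of) $\mathcal{A}$, which is exactly the content of Assumption \ref{assu:max-Phi}.

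The main obstacle is not conceptual but a matter of correctly interpreting the displayed ``$\min$'' and handling the attendant measure-theoretic bookkeeping. All the equalities above hold only $f_P$-almost everywhere, so I would state the equivalences up to $f_P$-null sets and restrict attention to $\{f_P>0\}$ throughout; this is precisely why intersecting with $\{f_P>0\}$ in the forward direction costs nothing. I would also read (\ref{eq:anchor}) as $\min_{\mathcal{A}}\operatorname*{ess\,sup}_{x\in\mathcal{A}}f_N(x)/f_P(x)=0$ over sets $\mathcal{A}$ of positive $f_P$-measure, since it is this reading — rather than a mere essential infimum of the ratio — whose vanishing is equivalent to the \emph{existence} of a positive-$f_P$-measure set on which $f_N/f_P$ is identically zero; this is what matches the \emph{exact} requirement $\Phi^*=1$ on a positive-mass set in Assumption \ref{assu:max-Phi}. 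Finally I would flag the degenerate cases explicitly: $\pi_P=1$ leaves $f_N$ undefined and renders the statement vacuous, and $\pi_P=0$ is incompatible with the setting, so the hypothesis $0<\pi_P<1$ should be made standing throughout.
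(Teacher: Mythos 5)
Your proposal is correct and follows essentially the same route as the paper's proof: both use Bayes' rule together with the SCAR decomposition $f=\pi_P f_P+(1-\pi_P)f_N$ to establish the pointwise equivalence $\Phi^*(x)=1\iff f_N(x)/f_P(x)=0$ on $\{f_P>0\}$, and then transfer this between the two set-based statements in each direction. The only difference is that you make explicit the measure-theoretic bookkeeping (restriction to $\{f_P>0\}$, the $\operatorname{ess\,sup}$ reading of the displayed $\min$, and the standing hypothesis $0<\pi_P<1$) that the paper's proof leaves implicit.
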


\begin{proof}
If (\ref{eq:anchor}) holds and $\mathcal{A}$ is an optimal solution,
\begin{eqnarray*}
\Phi^{*}(x) & = & \frac{\pi_{P}f_{P}(x)}{f(x)}\\
 & = & \frac{\pi_{P}f_{P}(x)}{\pi_{P}f_{P}(x)+(1-\pi_{P})f_{N}(x)}\\
 & = & \frac{1}{1+\frac{1-\pi_{P}}{\pi_{P}}\frac{f_{N}(x)}{f_{P}(x)}}\\
 & = & 1
\end{eqnarray*}
for all $x\in\mathcal{A}$, and therefore Assumption \ref{assu:max-Phi}
is satisfied by $\mathcal{A}$. If Assumption \ref{assu:max-Phi}
holds with set $\mathcal{A}$,
\begin{eqnarray*}
\frac{f_{N}(x)}{f_{P}(x)} & = & \frac{f(x)-\pi_{P}f_{P}(x)}{\left(1-\pi_{P}\right)f_{P}(x)}\\
 & = & \frac{f(x)-f(x)\Phi^{*}(x)}{\left(1-\pi_{P}\right)f_{P}(x)}\\
 & = & 0,
\end{eqnarray*}
for all $x\in\mathcal{A}$, which implies that (\ref{eq:anchor})
also holds.Proof of Theorem \ref{thm:variational}\label{subsec:Proof-of-Theorem-variational}
\end{proof}
According to (\ref{eq:f_phi}) and the definition of KL divergence,
\begin{eqnarray*}
\mathrm{KL}(f_{P}||f_{\Phi}) & = & \mathbb{E}_{f_{P}}\left[\log\frac{f_{P}(x)}{f_{\Phi}(x)}\right]\\
 & = & \mathbb{E}_{f_{P}}\left[\log\Phi^{*}(x)\right]+\mathbb{E}_{f_{P}}\left[\log f(x)\right]-\log\mathbb{E}_{f}\left[\Phi^{*}(x)\right]\\
 &  & -\left(\mathbb{E}_{f_{P}}\left[\log\Phi(x)\right]+\mathbb{E}_{f_{P}}\left[\log f(x)\right]-\log\mathbb{E}_{f}\left[\Phi(x)\right]\right)\\
 & = & -\mathcal{L}_{\mathrm{var}}\left(\Phi^{*}\right)+\mathcal{L}_{\mathrm{var}}\left(\Phi\right).
\end{eqnarray*}

\subsection{Analysis of minimum points of $\mathcal{L}_{\mathrm{var}}$\label{subsec:proof-Scale-invariance}}
\begin{prop}
For any constant $c>0$, $\mathcal{L}_{\mathrm{var}}\left(c\cdot\Phi\right)=\mathcal{L}_{\mathrm{var}}\left(\Phi\right)$.
\end{prop}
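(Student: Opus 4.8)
The plan is to substitute $c\cdot\Phi$ directly into the definition (\ref{eq:Lvar}) of $\mathcal{L}_{\mathrm{var}}$ and exploit the fact that the logarithm converts the multiplicative constant $c$ into an additive term $\log c$ in each of the two summands, where the two contributions then cancel. This is a one-line computation, so no auxiliary results are needed beyond linearity of expectation and the normalization of $f_{P}$.

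Concretely, I would first write
\begin{equation}
\mathcal{L}_{\mathrm{var}}(c\cdot\Phi)=\log\mathbb{E}_{f}\left[c\,\Phi(x)\right]-\mathbb{E}_{f_{P}}\left[\log\bigl(c\,\Phi(x)\bigr)\right].
\end{equation}
For the first term, linearity of expectation gives $\mathbb{E}_{f}[c\,\Phi(x)]=c\,\mathbb{E}_{f}[\Phi(x)]$, so that $\log\mathbb{E}_{f}[c\,\Phi(x)]=\log c+\log\mathbb{E}_{f}[\Phi(x)]$. For the second term, $\log(c\,\Phi(x))=\log c+\log\Phi(x)$, and since $f_{P}$ is a probability density with $\mathbb{E}_{f_{P}}[1]=1$, the constant factors out as $\mathbb{E}_{f_{P}}[\log(c\,\Phi(x))]=\log c+\mathbb{E}_{f_{P}}[\log\Phi(x)]$. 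Subtracting, the two $\log c$ contributions cancel exactly, leaving $\mathcal{L}_{\mathrm{var}}(c\cdot\Phi)=\log\mathbb{E}_{f}[\Phi(x)]-\mathbb{E}_{f_{P}}[\log\Phi(x)]=\mathcal{L}_{\mathrm{var}}(\Phi)$.

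There is no real obstacle here; the only point worth flagging is that the cancellation hinges on $f_{P}$ integrating to one, so that the additive constant $\log c$ appearing inside the second expectation contributes precisely $\log c$ rather than some other multiple — which is exactly what is produced by factoring $c$ out of the expectation in the first term. (Implicitly one also uses $\mathbb{E}_{f}[\Phi(x)]>0$ so that the logarithms are well defined, which is part of the standing hypothesis on $\Phi$.)
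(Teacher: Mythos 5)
Your proposal is correct and follows essentially the same route as the paper's proof: substitute $c\cdot\Phi$ into the definition, split off $\log c$ from both the $\log\mathbb{E}_{f}$ term and the $\mathbb{E}_{f_{P}}[\log(\cdot)]$ term (using $\int f_{P}(x)\,\mathrm{d}x=1$), and observe that the two $\log c$ contributions cancel. The only cosmetic difference is that the paper writes the expectations out as integrals, while you keep expectation notation throughout.
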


\begin{proof}
From the definition of $\mathcal{L}_{\mathrm{var}}$, we get
\begin{eqnarray*}
\mathcal{L}_{\mathrm{var}}\left(c\cdot\Phi\right) & = & \log\mathbb{E}_{f}\left[c\cdot\Phi(x)\right]-\mathbb{E}_{f_{P}}\left[\log\left(c\cdot\Phi(x)\right)\right]\\
 & = & \log\int cf(x)\Phi(x)\mathrm{d}x-\int f_{P}(x)\log\left(c\cdot\Phi(x)\right)\mathrm{d}x\\
 & = & \log c+\log\int f(x)\Phi(x)\mathrm{d}x\\
 &  & -\left(\log c\right)\cdot\int f_{P}(x)\mathrm{d}x-\int f_{P}(x)\log\Phi(x)\mathrm{d}x\\
 & = & \log\int f(x)\Phi(x)\mathrm{d}x-\int f_{P}(x)\log\Phi(x)\mathrm{d}x\\
 & = & \mathcal{L}_{\mathrm{var}}\left(\Phi\right)
\end{eqnarray*}
for any $c>0$.
\end{proof}
It can be seen from the above proposition that $\Phi^{*}$ is not
the unique minimum point of $\mathcal{L}_{\mathrm{var}}$. For all
$c\in(0,\frac{1}{\sup_{x}\Phi^{*}(x)}]$, $\Phi=c\cdot\Phi^{*}$ satisfies
\[
0<\Phi(x)\le\frac{\Phi^{*}(x)}{\sup_{x}\Phi^{*}(x)}\le1
\]
and is also a minimum point. In fact, we can show that all minimum
points of $\mathcal{L}_{\mathrm{var}}$ are in the form of $\Phi=c\cdot\Phi^{*}$.
\begin{prop}
\label{prop:optimal-Lvar}A function $\Phi:\mathbb{R}^{d}\mapsto[0,1]$
satisfies $\mathcal{L}_{\mathrm{var}}(\Phi)=\mathcal{L}_{\mathrm{var}}(\Phi^{*})$
iff $\Phi=c\cdot\Phi^{*}$and $c\in(0,\frac{1}{\sup_{x}\Phi^{*}(x)}]$.
\end{prop}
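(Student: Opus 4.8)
The plan is to prove both implications by reducing the statement to the vanishing of the Kullback--Leibler divergence via Theorem~\ref{thm:variational}, and then combining this with the scale invariance of $\mathcal{L}_{\mathrm{var}}$ established in the preceding proposition.

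First I would establish sufficiency. Suppose $\Phi=c\cdot\Phi^{*}$ with $c\in(0,\tfrac{1}{\sup_{x}\Phi^{*}(x)}]$. Then $0\le\Phi(x)=c\,\Phi^{*}(x)\le c\sup_{x}\Phi^{*}(x)\le 1$, so $\Phi$ is a legitimate map into $[0,1]$, and $\mathbb{E}_{f}[\Phi]=c\,\mathbb{E}_{f}[\Phi^{*}]=c\pi_{P}>0$ because Assumption~\ref{assu:max-Phi} forces $\Phi^{*}$ to be strictly positive on a set of positive measure. The scale invariance $\mathcal{L}_{\mathrm{var}}(c\cdot\Phi^{*})=\mathcal{L}_{\mathrm{var}}(\Phi^{*})$ from the previous proposition then gives $\mathcal{L}_{\mathrm{var}}(\Phi)=\mathcal{L}_{\mathrm{var}}(\Phi^{*})$, which is the desired equality.

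For necessity, suppose $\mathcal{L}_{\mathrm{var}}(\Phi)=\mathcal{L}_{\mathrm{var}}(\Phi^{*})$. By Theorem~\ref{thm:variational} this is equivalent to $\mathrm{KL}(f_{P}\|f_{\Phi})=0$, and since the KL divergence vanishes exactly when its two arguments agree as probability measures (Gibbs' inequality, strict convexity of $t\mapsto t\log t$), we conclude $f_{\Phi}=f_{P}$ almost everywhere. Substituting the definition $f_{\Phi}(x)=\Phi(x)f(x)/\mathbb{E}_{f}[\Phi]$ together with $f_{P}(x)=\Phi^{*}(x)f(x)/\pi_{P}$ and cancelling the common factor $f(x)$ on the support of $f$ yields $\Phi(x)/\mathbb{E}_{f}[\Phi]=\Phi^{*}(x)/\pi_{P}$ for $f$-almost every $x$. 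Hence $\Phi=c\cdot\Phi^{*}$ with $c=\mathbb{E}_{f}[\Phi]/\pi_{P}>0$. Finally, the range constraint $\Phi(x)\le 1$ imposed on the classifier forces $c\,\Phi^{*}(x)\le 1$ for every $x$, i.e.\ $c\le 1/\sup_{x}\Phi^{*}(x)$, completing the characterization.

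The step I expect to require the most care is the passage from $f_{\Phi}=f_{P}$ back to the pointwise identity $\Phi=c\cdot\Phi^{*}$: the KL argument only controls the two densities on the support of $f$, so I must either argue that the equality extends to all relevant $x$ or interpret it modulo $f$-null sets, and I must verify that the normalizing constant recovered from the ratio is exactly $\mathbb{E}_{f}[\Phi]/\pi_{P}$ and is finite and positive. The range bound $c\le 1/\sup_{x}\Phi^{*}$ is then immediate, but it should be stated so that the endpoint $c=1/\sup_{x}\Phi^{*}$ is included, matching the half-open interval in the statement.
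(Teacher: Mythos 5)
Your proof is correct, but its necessity direction takes a genuinely different route from the paper's. The paper argues via a first-variation (Euler--Lagrange) computation: assuming $\Phi$ is a minimum point, it sets $\left.\frac{\partial}{\partial\epsilon}\mathcal{L}_{\mathrm{var}}(\Phi+\epsilon h)\right|_{\epsilon=0}=0$ for an arbitrary perturbation $h$, reads off $\Phi(x)\equiv\mathbb{E}_{f}[\Phi]\,f_{P}(x)/f(x)$, and then applies the Bayes rule to conclude $\Phi=(\mathbb{E}_{f}[\Phi]/\pi_{P})\cdot\Phi^{*}$. You instead route the argument through Theorem \ref{thm:variational}: equality of the losses is equivalent to $\mathrm{KL}(f_{P}\|f_{\Phi})=0$, which by Gibbs' inequality forces $f_{\Phi}=f_{P}$ almost everywhere, after which cancelling $f(x)$ gives the same identification $c=\mathbb{E}_{f}[\Phi]/\pi_{P}$ and the same endpoint bound from $\sup_{x}\Phi(x)\le1$. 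Your route buys rigor: it characterizes \emph{global} minimizers directly, whereas the paper's stationarity argument implicitly needs the first variation to vanish for arbitrary $h$ --- delicate when $\Phi$ touches the boundary of the constraint set $[0,1]$, where only one-sided perturbations are admissible --- and needs $\Phi>0$ on the support of $f_{P}$ to justify dividing by $\Phi(x)$. What the paper's route buys is self-containment: it does not lean on the KL identity (though no circularity would arise, since Theorem \ref{thm:variational} is proved independently of this proposition). You are also right that the conclusion only holds $f$-almost everywhere; the paper's proof silently has the same caveat, so the statement should be read modulo $f$-null sets in both treatments. Your sufficiency direction coincides with the paper's appeal to scale invariance, merely spelling out the range and positivity checks the paper calls trivial.
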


\begin{proof}
The sufficiency is trivial, and so we only give the proof of the necessity.
Suppose that $\Phi$ is a minimum point of $\mathcal{L}_{\mathrm{var}}$,
then
\begin{eqnarray*}
\left.\frac{\partial}{\partial\epsilon}\mathcal{L}_{\mathrm{var}}(\Phi+\epsilon h)\right|_{\epsilon=0} & = & \frac{\partial}{\partial\epsilon}\log\int f(x)\left(\Phi(x)+\epsilon h(x)\right)\mathrm{d}x\\
 &  & -\frac{\partial}{\partial\epsilon}\int f_{P}(x)\log\left(\Phi(x)+\epsilon h(x)\right)\mathrm{d}x\\
 & = & \int\frac{f(x)h(x)}{\mathbb{E}_{f}[\Phi]}-\frac{f_{P}(x)h(x)}{\Phi(x)}\mathrm{d}x
\end{eqnarray*}
must be zero for an arbitrary function $h(x)$. Hence,
\[
\frac{f(x)}{\mathbb{E}_{f}[\Phi]}-\frac{f_{P}(x)}{\Phi(x)}\equiv0\qquad\qquad
\]
\[
\Rightarrow\Phi(x)\equiv\mathbb{E}_{f}[\Phi]\frac{f_{P}(x)}{f(x)}
\]
By combining the above equation and the Bayes rule, we have
\begin{eqnarray*}
\Phi(x) & = & \mathbb{E}_{f}[\Phi]\frac{f_{P}(x)}{f(x)}\\
 & = & \frac{\mathbb{E}_{f}[\Phi]}{\pi_{P}}\cdot\Phi^{*}(x).
\end{eqnarray*}
It is obvious that $c=\frac{\mathbb{E}_{f}[\Phi]}{\pi_{P}}>0$. In
addition, we can conclude from $\sup_{x}\Phi(x)\le1$ that
\[
\sup_{x}c\cdot\Phi^{*}(x)\le1\Rightarrow c\le\frac{1}{\sup_{x}\Phi^{*}(x)}.
\]
\end{proof}
Based on the above analysis, we can conclude that $\Phi^{*}$ can
be uniquely determined for given $f$ and $f_{P}$ under Assumption
\ref{assu:max-Phi}.
\begin{prop}
\label{prop:uniqueness}If Assumption \ref{assu:max-Phi} is satisfied
and $\sup_{x}\Phi(x)=1$ for a function $\Phi:\mathbb{R}^{d}\mapsto[0,1]$,
$\mathcal{L}_{\mathrm{var}}(\Phi)=\mathcal{L}_{\mathrm{var}}(\Phi^{*})$
iff $\Phi=\Phi^{*}$.
\end{prop}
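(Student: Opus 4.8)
The plan is to obtain this as an immediate corollary of Proposition \ref{prop:optimal-Lvar}, using Assumption \ref{assu:max-Phi} to eliminate the residual scale ambiguity. The sufficiency direction is trivial: if $\Phi=\Phi^{*}$ then $\mathcal{L}_{\mathrm{var}}(\Phi)=\mathcal{L}_{\mathrm{var}}(\Phi^{*})$ by definition, so I would focus entirely on the necessity direction.

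First I would invoke Proposition \ref{prop:optimal-Lvar}. Since $\Phi:\mathbb{R}^{d}\mapsto[0,1]$ attains the minimum value $\mathcal{L}_{\mathrm{var}}(\Phi)=\mathcal{L}_{\mathrm{var}}(\Phi^{*})$, that proposition forces $\Phi=c\cdot\Phi^{*}$ for some scalar $c\in\left(0,\tfrac{1}{\sup_{x}\Phi^{*}(x)}\right]$. The whole task then reduces to showing that the normalization hypothesis $\sup_{x}\Phi(x)=1$ pins this scalar down to $c=1$.

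Next I would compute $\sup_{x}\Phi^{*}(x)$ explicitly. Because $\Phi^{*}(x)=\mathbb{P}(y=+1|x)$ takes values in $[0,1]$, and Assumption \ref{assu:max-Phi} supplies a set $\mathcal{A}$ with $\int_{\mathcal{A}}f_{P}(x)\,\mathrm{d}x>0$ on which $\Phi^{*}(x)=1$, we conclude $\sup_{x}\Phi^{*}(x)=1$. Consequently the admissible range for $c$ collapses to $(0,1]$. Then, since $\Phi=c\cdot\Phi^{*}$, we have $\sup_{x}\Phi(x)=c\cdot\sup_{x}\Phi^{*}(x)=c$; combining this with the assumed normalization $\sup_{x}\Phi(x)=1$ yields $c=1$, hence $\Phi=\Phi^{*}$.

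I do not anticipate any genuine obstacle, as the statement is essentially a bookkeeping refinement of Proposition \ref{prop:optimal-Lvar}. The single point deserving care is the identity $\sup_{x}\Phi^{*}(x)=1$: this is precisely where Assumption \ref{assu:max-Phi}, rather than the SCAR Assumption \ref{assu:scar}, becomes indispensable. Without a region where $\Phi^{*}$ attains the extreme value $1$, the supremum could be strictly below $1$, the upper endpoint of the interval for $c$ would exceed $1$, and the normalization constraint would then select a scaling $c\neq 1$ — so the uniqueness of $\Phi^{*}$ would fail. Thus Assumption \ref{assu:max-Phi} is exactly the hypothesis that rules out the nontrivial rescalings $c\cdot\Phi^{*}$ permitted by scale invariance.
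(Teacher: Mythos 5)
Your proposal is correct and follows exactly the route the paper intends: the paper's own proof is simply ``this is a trivial corollary of Proposition \ref{prop:optimal-Lvar},'' and your argument spells out precisely why --- Assumption \ref{assu:max-Phi} forces $\sup_{x}\Phi^{*}(x)=1$, so the admissible scalars collapse to $c\in(0,1]$, and the normalization $\sup_{x}\Phi(x)=1$ then pins down $c=1$. Your closing remark on why Assumption \ref{assu:max-Phi} is indispensable is also consistent with the paper's discussion in Remark \ref{rem:why_normalization}.
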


\begin{proof}
This is a trivial corollary of Proposition \ref{prop:optimal-Lvar}.
\end{proof}
Furthermore, the following proposition provides the optimal solutions
in the case where only estimated $f,f_{P}$ are available.
\begin{prop}
\label{prop:bias-distribution}All solutions to $\min_{\Phi}\hat{\mathcal{L}}_{\mathrm{var}}(\Phi)$
with $\hat{\mathcal{L}}_{\mathrm{var}}(\Phi)=\log\mathbb{E}_{\hat{f}}\left[\Phi(x)\right]-\mathbb{E}_{\hat{f}_{P}}\left[\log\left(\Phi(x)\right)\right]$
satisfy
\[
\Phi(x)\propto\hat{f}_{P}(x)/\hat{f}(x).
\]
\end{prop}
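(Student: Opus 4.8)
The plan is to recognize that Proposition \ref{prop:bias-distribution} is simply the empirical counterpart of Proposition \ref{prop:optimal-Lvar}: the first-variation argument used there for $\mathcal{L}_{\mathrm{var}}$ never invoked the Bayes rule or any structural property of the true distributions $f, f_{P}$ — it only used that $f, f_{P}$ are normalized distributions. I would therefore repeat that argument verbatim with $\hat{f}, \hat{f}_{P}$ in place of $f, f_{P}$, and simply stop \emph{before} the final step that re-expresses the answer through $\Phi^{*}$ (which is the only place where the relation between $f, f_P$ and $\pi_P$ entered).

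Concretely, I would take a candidate minimizer $\Phi$, form $\hat{\mathcal{L}}_{\mathrm{var}}(\Phi+\epsilon h)$ for an arbitrary admissible direction $h$, and compute the first variation
\[
\left.\frac{\partial}{\partial\epsilon}\hat{\mathcal{L}}_{\mathrm{var}}(\Phi+\epsilon h)\right|_{\epsilon=0}=\int\left(\frac{\hat{f}(x)h(x)}{\mathbb{E}_{\hat{f}}[\Phi]}-\frac{\hat{f}_{P}(x)h(x)}{\Phi(x)}\right)\mathrm{d}x .
\]
Requiring this to vanish for every $h$ and invoking the fundamental lemma of the calculus of variations forces the bracketed integrand to be zero $\hat{f}$-almost everywhere, which gives
\[
\Phi(x)=\mathbb{E}_{\hat{f}}[\Phi]\cdot\frac{\hat{f}_{P}(x)}{\hat{f}(x)},
\]
i.e. $\Phi(x)\propto\hat{f}_{P}(x)/\hat{f}(x)$ with proportionality constant $\mathbb{E}_{\hat{f}}[\Phi]>0$. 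Sufficiency (that every such $\Phi$ is indeed a minimizer) follows by substituting back, exactly as in Proposition \ref{prop:optimal-Lvar}. An equivalent and cleaner route reuses the algebra of Theorem \ref{thm:variational}: for \emph{any} pair of distributions, setting $\hat{f}_{\Phi}=\Phi\hat{f}/\mathbb{E}_{\hat{f}}[\Phi]$ one verifies that $\hat{\mathcal{L}}_{\mathrm{var}}(\Phi)=\mathrm{KL}(\hat{f}_{P}\|\hat{f}_{\Phi})+C$ with $C$ independent of $\Phi$; since the KL divergence is nonnegative and vanishes only when its two arguments coincide, the minimizers are precisely those $\Phi$ with $\hat{f}_{\Phi}=\hat{f}_{P}$, yielding the same proportionality.

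I expect the main obstacle to be not the computation but the careful handling of supports and of the effective domain of $\hat{\mathcal{L}}_{\mathrm{var}}$. The relation $\Phi\propto\hat{f}_{P}/\hat{f}$ is only meaningful where $\hat{f}(x)>0$; on $\{\hat{f}=0\}$ the functional places no constraint on $\Phi$ at all, so the claim must be read up to this set. One must also restrict to $\Phi$ that keep the functional finite (so that $\Phi>0$ on the support of $\hat{f}_{P}$ and $\mathbb{E}_{\hat{f}}[\Phi]>0$). When $\hat{f},\hat{f}_{P}$ are empirical measures, the variation has to be taken pointwise over the atoms rather than as a formal integral, and the boundary cases treated explicitly — points with $\hat{f}_{P}>0=\hat{f}$ drive $\Phi$ up to its cap while points with $\hat{f}>0=\hat{f}_{P}$ drive $\Phi$ to $0$ — so that the clean proportionality survives on the common support. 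Making the fundamental-lemma step rigorous in this discrete setting is where the real care is needed.
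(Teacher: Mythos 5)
Your proposal is correct and takes essentially the same route as the paper: the paper's own proof is simply ``omitted as it is similar to that of Proposition~\ref{prop:optimal-Lvar}'', i.e.\ exactly the first-variation argument you carry out with $\hat{f},\hat{f}_{P}$ in place of $f,f_{P}$, stopping before the Bayes-rule step. Your added caveats about supports, the effective domain, and the $[0,1]$ cap are sensible refinements (and are implicitly needed when the paper later invokes this proposition under the constraint $\max_{x}\Phi(x)=1$ in the proof of Theorem~\ref{thm:bias}), but they do not alter the substance of the argument.
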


\begin{proof}
Omitted as it is similar to that of Proposition \ref{prop:optimal-Lvar}.
\end{proof}
Analysis of Regularization\label{subsec:proof-regularization}

For given $\mathcal{P}$ and $\mathcal{U}$, the empirical estimate
of $\mathcal{L}_{\mathrm{var}}(\Phi)$ is
\[
\hat{\mathcal{L}}_{\mathrm{var}}(\Phi)=\log\frac{1}{N}\sum_{x\in\mathcal{U}}\Phi(x)-\frac{1}{M}\sum_{x\in\mathcal{P}}\log\Phi(x).
\]
Therefore, if the capacity of the model $\Phi$ is extremely high,
simply minimizing $\hat{\mathcal{L}}_{\mathrm{var}}(\Phi)$ yields
\begin{equation}
\Phi(x)=\left\{ \begin{array}{ll}
1, & x\in\mathcal{P},\\
0, & \text{otherwise}.
\end{array}\right.\label{eq:overfitting}
\end{equation}

This overfitting issue can be partly alleviated by early stopping,
i.e., stopping the training when $\mathcal{L}_{\mathrm{var}}(\Phi)$
estimated on the validation set starts to increase. But according
to our numerical experience, it can be more effectively overcome by
the \emph{MixUp} based regularization described in Section \ref{subsec:Regularized-learning-method}.

For two randomly selected $x'\in\mathcal{P}$ and $x''\in\mathcal{U}$,
if $\Phi^{*}(\tilde{x})$ is extremely underestimated with $\Phi(\tilde{x})\to0$
for the virtual sample $\tilde{x}$ (see (\ref{eq:mixup})), we can
conclude that the regularization w.r.t.~$\tilde{x}$
\begin{eqnarray*}
\left(\log\tilde{\Phi}-\log\Phi(\tilde{x})\right)^{2} & \ge & \left(\log\gamma-\log\Phi(\tilde{x})\right)^{2}\\
 & = & \mathcal{O}\left(\left(\log\Phi(\tilde{x})\right)^{2}\right)\to\infty
\end{eqnarray*}
as $\Phi(\tilde{x})\to0$. Thus, with the regularization (\ref{eq:reg}),
the resulting $\Phi(x)$ decay smoothly outside of $\mathcal{P}$
and the trivial solution (\ref{eq:overfitting}) is excluded.

Another possible choice is the mean square error based regularization
$\mathbb{E}_{\tilde{\Phi},\tilde{x}}\left[\left(\tilde{\Phi}-\Phi(\tilde{x})\right)^{2}\right]$,
but this regularization term is bounded and penalizes less for overfitting.

We can also define the regularization by using the standard cross-entropy
loss, which yields the regularization loss
\[
-\tilde{\Phi}\log\Phi(\tilde{x})-\left(1-\tilde{\Phi}\right)\log\left(1-\Phi(\tilde{x})\right)=\mathcal{O}\left(-\log\Phi(\tilde{x})\right)
\]
for each $\tilde{x}$. It can be seen that the proposed mean squared
logarithmic error based regularization penalizes more heavily the
underestimation of $\Phi(\tilde{x})$.

Another possible choice is the mean square error based regularization
$\mathbb{E}_{\tilde{\Phi},\tilde{x}}\left[\left(\tilde{\Phi}-\Phi(\tilde{x})\right)^{2}\right]$,
but this regularization term is bounded and penalizes less for overfitting
has less penalization.

As for the MixUp strategy, the MixUp between $\mathcal{P}$ and $\mathcal{U}$
ensures that $\tilde{\Phi}\approx1$ for $\gamma\approx1$ and $\tilde{\Phi}>\Phi(x'')$
(see (\ref{eq:mixup})), so it can solve the overfitting problem by
penalizing the underestimation of $\Phi(x)$ heavily for unlabeled
data. As a comparison, MixUp inside $\mathcal{P}$ or $\mathcal{U}$
cannot effectively penalize the underestimation of $\Phi(x)$ outside
of $\mathcal{P}$. So we implement MixUp between $\mathcal{P}$ and
$\mathcal{U}$ as in (\ref{eq:mixup}), and can lead to more accurate
and robust classifier according to our numerical experience than MixUp
on $\mathcal{P}\cup\mathcal{U}$ (i.e., $x'$ and $x''$ are both
randomly drawn from $\mathcal{P}\cup\mathcal{U}$) according to our
numerical experience.

The advantage of (\ref{eq:reg}) is demonstrated in Section \ref{subsec:ablation-study}.

\subsection{Proof of Theorem \ref{thm:correctness}\label{subsec:Proof-of-correctness}}

Notice that the variational loss estimated from data
\begin{eqnarray*}
\hat{\mathcal{L}}_{\mathrm{var}}(\Phi(\cdot,\theta)) & = & \log\frac{\sum_{x\in\mathcal{U}}\Phi(x,\theta)}{N}-\frac{\sum_{x\in\mathcal{P}}\log\Phi(x,\theta)}{M}\\
 & \stackrel{p}{\to} & \mathcal{L}_{\mathrm{var}}\left(\Phi(\cdot,\theta)\right)
\end{eqnarray*}
for a given $\theta$ as $M,N\to\infty$. According to Theorem 2.1
in \cite{newey1994large} and Proposition \ref{prop:uniqueness},
we can conclude that the optimal solution $\Phi(x,\theta)$ to (\ref{eq:L})
converges to $\Phi(x,\theta^{*})$ when $M,N\to\infty$ and $\lambda\to0$.

\subsection{Proof of Theorem \ref{thm:bias}\label{subsec:proof-bias-selection}}

By considering condition (ii) in Assumption \ref{assu:separability}
and the fact that $\Phi^{*}(x)$ can be written as
\[
\Phi^{*}(x)=Z^{-1}f_{P}(x)/f(x),
\]
we have

\begin{eqnarray*}
\max_{x}\Phi^{*}(x) & = & Z^{-1}\max_{x}f_{P}(x)/f(x)\in[1-\epsilon,1]\\
\Rightarrow Z & \in & \left[\max_{x}f_{P}(x)/f(x),\frac{\max_{x}f_{P}(x)/f(x)}{1-\epsilon}\right].
\end{eqnarray*}
It can then be known from Proposition \ref{prop:bias-distribution}
that the optimal solution $\Phi$ to
\[
\min\mathcal{L}_{\mathrm{var}}^{\prime}(\Phi)=\log\mathbb{E}_{f}[\Phi(x)]-\mathbb{E}_{f_{P}^{\prime}}[\log\Phi(x)]
\]
under constraint $\max_{x}\Phi(x)=1$ is given by
\[
\Phi(x)=\frac{f_{P}^{\prime}(x)/f(x)}{\max_{x}f_{P}^{\prime}(x)/f(x)}.
\]
We can obtain from condition (i) in Assumption \ref{assu:separability}
that
\begin{eqnarray*}
\Phi(x) & \ge & \frac{c_{1}f_{P}(x)/f(x)}{c_{2}\max f_{P}(x)/f(x)}\\
 & \ge & \frac{c_{1}f_{P}(x)/f(x)}{c_{2}Z}\\
 & = & \frac{c_{1}}{c_{2}}\Phi^{*}(x)
\end{eqnarray*}
and
\begin{eqnarray*}
\Phi(x) & \le & \frac{c_{2}f_{P}(x)/f(x)}{c_{1}\max f_{P}(x)/f(x)}\\
 & \le & \frac{c_{2}f_{P}(x)/f(x)}{c_{1}(1-\epsilon)Z}\\
 & = & \frac{c_{2}}{c_{1}(1-\epsilon)}\Phi^{*}(x)
\end{eqnarray*}

For convenience of analysis, we denote the misclassification probability
of $\Phi$ for a given sample $x$ by

\[
\mathcal{R}_{x}(\Phi)=\left\{ \begin{array}{ll}
\mathbb{P}\left(y=+1|x\right), & \text{if }\Phi(x)<0.5\\
\mathbb{P}\left(y=-1|x\right), & \text{if }\Phi(x)\ge0.5
\end{array}\right..
\]
Thus,

\begin{eqnarray*}
\mathcal{R}_{x}(\Phi)-\mathcal{R}_{x}(\Phi^{*}) & = & \Phi^{*}(x)\cdot1_{\Phi(x)<0.5}+\left(1-\Phi^{*}(x)\right)1_{\Phi(x)\ge0.5}\\
 &  & -\Phi^{*}(x)\cdot1_{\Phi^{*}(x)<0.5}-\left(1-\Phi^{*}(x)\right)1_{\Phi^{*}(x)\ge0.5}\\
 & = & \left(2\Phi^{*}(x)-1\right)\cdot1_{\Phi(x)<0.5}\cdot1_{\Phi^{*}(x)\ge0.5}\\
 &  & +\left(1-2\Phi^{*}(x)\right)\cdot1_{\Phi(x)\ge0.5}\cdot1_{\Phi^{*}(x)<0.5}\\
 & \le & \left(\frac{\Phi^{*}(x)}{\Phi(x)}-1\right)\cdot1_{\Phi(x)<0.5}\cdot1_{\Phi^{*}(x)\ge0.5}\\
 &  & +\left(1-\frac{\Phi^{*}(x)}{\Phi(x)}\right)\cdot1_{\Phi(x)<0.5}\cdot1_{\Phi^{*}(x)\ge0.5}\\
 & \le & \left(\frac{c_{2}}{c_{1}}-1\right)\cdot1_{\Phi(x)<0.5}\cdot1_{\Phi^{*}(x)\ge0.5}\\
 &  & +\left(1-\frac{c_{1}(1-\epsilon)}{c_{2}}\right)\cdot1_{\Phi(x)<0.5}\cdot1_{\Phi^{*}(x)\ge0.5}
\end{eqnarray*}
and
\begin{eqnarray*}
\mathcal{R}(\Phi)-\mathcal{R}(\Phi^{*}) & = & \mathbb{E}\left[\mathcal{R}_{x}(\Phi)-\mathcal{R}_{x}(\Phi^{*})\right]\\
 & \le & \max\left\{ \frac{c_{2}}{c_{1}}-1,1-\frac{c_{1}(1-\epsilon)}{c_{2}}\right\} .
\end{eqnarray*}

\section{Experiment details\label{sec:Experiments-details}}

The data sets are divided into training and test sets. For VPU, a
cross-validation criterion is provided, so we further proportionally
divide the training set into training and validation sets.

For each experiment, 10 repeated runs are done, and mean and standard
variance of test accuracy are calculated. By default, for each run
the neural network is trained for 50 epochs, and results are reported
at the epoch with lowest \textit{\emph{Kullback-Leibler}} divergence
on the validation set. We fix $\alpha$ to $0.3$ and use the \textit{\emph{Kullback-Leibler}}
divergence on the validation set as the criterion for tuning $\lambda$,
selected in \{$1e-4,3e-4,1e-3,\cdots,1,3$\}.

Moreover, we denote $\mathbb{P}(y=+1)$, $\mathbb{P}(y=-1)$ by $\pi_{P}$
and $\pi_{N}$.

\subsection{UCI datasets}

We first clarify the UCI datasets used in our experiments in Table
\ref{tab:Description-of-UCI}. Then, we give the detailed experimental
settings of each experiment in Table \ref{tab:Experimental-settings-UCI}.
The datasets do not go through any preprocessing.

\begin{table}
\caption{\label{tab:Description-of-UCI}Description of UCI datasets used in
experiments.}

\begin{center}
\begin{tabular}{ccccc} 
\toprule
Dataset &  $N$ & size of test set & $d$  \\ 
\midrule 
\footnotesize
Page Blocks &  \scriptsize$3284$ & \scriptsize$2189$ & \scriptsize$10$ \\
Grid Stability &  \scriptsize$6000$ & \scriptsize$4000$ & \scriptsize$14$\\
Avila &  \scriptsize$10430$ & \scriptsize$10437$ & \scriptsize$10$\\
\bottomrule 
\end{tabular}
\end{center}
\end{table}

\begin{table}
\begin{centering}
\caption{\label{tab:Experimental-settings-UCI}Experimental settings for UCI
datasets. $N_{P},M,M_{v}$ denote respectively the number of positive
samples in the training set, number of labeled positive samples in
the training set, number of labeled positive samples in the validation
set. The size of validation unlabeled samples can be calculated via
$N_{v}=N\times M_{v}/M$, where $N$ is the size of training unlabeled
samples.}
\par\end{centering}
\begin{center}
\begin{tabular}{ccccccc}
\toprule
Experiment & setting & Data amount & Validation size & $\pi_P$ & Hyperparameter \\
\midrule
\footnotesize
Page Blocks$^1$ & \scriptsize'2,3,4,5' vs '1' & \scriptsize$N_P$=$342$ $M$=$100$ &  \scriptsize$M_v$=$16$ & \scriptsize$0.104$ & \scriptsize$\lambda=0.0003,\alpha=0.3$\\
\footnotesize
Page Blocks$^2$ & \scriptsize'1' vs '2,3,4,5' & \scriptsize$N_P$=$2942$ $M$=$100$ &  \scriptsize$M_v$=$16$ & \scriptsize$0.896$ & \scriptsize$\lambda=0.0001,\alpha=0.3$\\
\midrule
\footnotesize
Grid Stability$^1$ & \scriptsize'stable' vs 'unstable' & \scriptsize$N_P$=$2187$ $M$=$1000$ &  \scriptsize$M_v$=$167$ & \scriptsize$0.365$ & \scriptsize$\lambda=0.1,\alpha=0.3$\\
\footnotesize
Grid Stability$^2$ & \scriptsize'unstable' vs 'stable' & \scriptsize$N_P$=$3813$  $M$=$1000$ &  \scriptsize$M_v$=$167$ & \scriptsize$0.635$ & \scriptsize$\lambda=0.1,\alpha=0.3$\\
\midrule
\footnotesize
Avila$^1$ & \scriptsize'A' vs The rest & \scriptsize$N_P$=$4286$ $M$=$2000$ &  \scriptsize$M_v$=$192$ & \scriptsize$0.411$ & \scriptsize$\lambda=0.1,\alpha=0.3$\\
\footnotesize
Avila$^2$ & \scriptsize'A,F' vs The rest & \scriptsize$N_P$=$6247$ $M$=$2000$ &  \scriptsize$M_v$=$192$ & \scriptsize$0.599$ & \scriptsize$\lambda=0.03,\alpha=0.3$\\
\bottomrule
\end{tabular}
\end{center}
\end{table}

\subsection{FashionMNIST, CIFAR-10 and STL-10}

Labels of ten classes of each image datasets are reported in Table.~\ref{tab:image-dataset-numbers},
which are denoted by numbers $0$ to $9$ in Section \ref{subsec:Image-datasets}.
The details of the experiments are shown in Table \ref{tab:Experimental-settings-for image data}.
All datasets conduct data preprocessing: normalization with mean and
standard deviation both as $0.5$ at all dimensions.

\begin{table}
\begin{center}
\begin{tabular}{cc}
\toprule
FashionMNIST & t-shirt, trouser, pullover, dress, coat, sandal, shirt, sneaker, bag, ankle boot \\
\midrule
CIFAR-10 & airplane, automobile, bird, cat, deer, dog, frog, horse, ship, truck\\
\midrule
STL-10 & airplane, bird, car, cat, deer, dog, horse, monkey, ship, truck\\
\bottomrule
\end{tabular}
\end{center}

\caption{\label{tab:image-dataset-numbers}Class labels of image datasets,
which are denoted by numbers $0,1,\ldots9$ in Section \ref{subsec:Image-datasets}.}
\end{table}

\begin{table}
\caption{\label{tab:Experimental-settings-for image data}Experimental settings
for FashionMNIST, CIFAR-10 and STL-10. $N_{P},M,M_{v}$ denote respectively
the number of positive samples in the training set, number of labeled
positive samples in the training set, number of labeled positive samples
in the validation set. The size of validation unlabeled samples can
be calculated via $N_{v}=N\times M_{v}/M$, where $N$ is the size
of training unlabeled samples.}

\begin{center}
\begin{tabular}{ccccccc}
\toprule
Experiment & Setting & Data amount & Validation size& $\pi_P$ &Hyperparameter\\
\midrule

FashionMNIST$^1$ & \scriptsize'1,4,7' vs '0,2,3,5,6,8,9' & \scriptsize$N_P$=$15000$ $M$=$3000$ &  \scriptsize$M_v$=$500$ & \scriptsize$0.300$ &  \scriptsize$\lambda=0.3,\alpha=0.3$\\
FashionMNIST$^2$ & \scriptsize'0,2,3,5,6,8,9' vs '1,4,7' & \scriptsize$N_P$=$39000$ $M$=$3000$ &  \scriptsize$M_v$=$500$  & \scriptsize$0.700$ &  \scriptsize$\lambda=3,\alpha=0.3$\\
\midrule

CIFAR-10$^1$ & \scriptsize'0,1,8,9' vs '2,3,4,5,6,7' & \scriptsize$N_P$=$17000$ $M$=$3000$ &  \scriptsize$M_v$=$500$ & \scriptsize$0.400$ &  \scriptsize$\lambda=0.03,\alpha=0.3$\\
CIFAR-10$^2$ & \scriptsize'2,3,4,5,6,7' vs '0,1,8,9' & \scriptsize$N_P$=$27000$ $M$=$3000$ &   \scriptsize$M_v$=$500$ &\scriptsize$0.600$ &  \scriptsize$\lambda=0.01,\alpha=0.3$\\
\midrule

STL-10$^1$ & \scriptsize'0,2,3,8,9' vs '1,4,5,6,7' & \scriptsize$N_P$=$100000$ $M$=$2500$ &  \scriptsize$M_v$=$250$ & \scriptsize$unknown$  &  \scriptsize$\lambda=0.3,\alpha=0.3$\\
STL-10$^2$ & \scriptsize'1,4,5,6,7' vs '0,2,3,8,9' & \scriptsize$N_P$=$100000$ $M$=$2500$ &   \scriptsize$M_v$=$250$ &\scriptsize$unknown$  &  \scriptsize$\lambda=0.1,\alpha=0.3$\\
\bottomrule
\end{tabular}
\end{center}
\end{table}

\subsection{Choice of hyperparameters of GenPU\label{subsec:Choice-of-hyperparameters-GenPU}}

GenPU contains four hyperparameters: $\pi_{P}\lambda_{p}$, $\pi_{P}\lambda_{u}$,
$\pi_{N}\lambda_{n}$, $\pi_{N}\lambda_{u}$. Although the parameters
are coupled for given $\pi_{P}$ in \cite{hou2017generative}, our
experience shows that the better performance can be achieved by selecting
the four parameters independently. Table \ref{tab:GenPU-parameters}
shows the best hyperparameters which lead to the largest classification
accuracies on test sets. They are selected in $\{0.01,0.05,0.1,0.5,\ldots,1000,5000\}$
by greedy grid search.

\begin{table}
\caption{\label{tab:GenPU-parameters}Choice of hyperparameters for GenPU.}

\begin{center}
\begin{tabular}{cccccc} 
\toprule
Dataset & $\pi_{P}\lambda_{p}$ & $\pi_{P}\lambda_{u}$ & $\pi_{N}\lambda_{n}$& $\pi_{N}\lambda_{u}$\\
\midrule 
FashionMNIST & $0.01$ & $1$ & $100$ & $1$\\
& $0.01$ & $1$ & $1000$ & $50$\\
\midrule
CIFAR-10 & $0.01$ & $1$ & $100$ & $1$\\
& $0.01$ & $1$ & $100$ & $1$\\
\midrule
Page Blocks & $0.01$ & $1$ & $1000$ & $1$\\
& $0.01$ & $1$ & $200$ & $1$\\
\midrule
Grid Stability & $0.01$ & $1$ & $1000$ & $500$\\
& $0.01$ & $1$ & $1000$ & $500$\\
\midrule
Avila & $0.01$ & $1$ & $100$ & $1$\\
& $0.001$ & $1$ & $1000$ & $500$\\
\bottomrule 
\end{tabular}
\end{center}
\end{table}

\subsection{Comparison with known $\pi_{P}$}

In Table \ref{tab:UCI-Image-exact-pi}, we compare the classification
accuracies of VPU, nnPU and uPU on UCI and image datasets. All the
settings are the same as in the main body of the paper, except that
the true value of $\pi_{P}$ is assumed to be known for nnPU and uPU.
Notice that the experiment on STL-10 is not performed because the
exact $\pi_{P}$ is unavailable.

\begin{table}[tbh]
\caption{\label{tab:UCI-Image-exact-pi}Classification accuracies (\%) of compared
methods, where $*$ means that the algorithm is performed with the
true value of $\pi_{P}$.}

\footnotesize
\begin{center}
\begin{tabular}{cccccccccc}  
\toprule  
Dataset  
& Page Blocks$^1$ & Page Blocks$^2$ & Grid Stability$^1$ &  Grid Stability$^2$ & Avila$^1$& Avila$^2$\\
\midrule 
\footnotesize
VPU &
\scriptsize \bm{$93.6\pm0.4$} & \scriptsize \bm{$93.5\pm0.7}$ & \scriptsize \bm{$92.6\pm0.3$} &  \scriptsize $89.5\pm0.5$ & \scriptsize \bm{$82.0\pm0.9$} & \scriptsize \bm{$87.2\pm0.5$}\\
\midrule
\footnotesize
nnPU$^*$ &
\scriptsize$92.3\pm1.2$ & \scriptsize$91.7\pm0.6$ & \scriptsize$91.5\pm1.7$ & \scriptsize \bm{$90.5\pm0.3$} & \scriptsize $75.9\pm2.2$ &\scriptsize $84.8\pm0.5$ \\
\midrule
\footnotesize
uPU$^*$ &
\scriptsize $93.0\pm1.2$ &\scriptsize $90.0\pm2.8$ &\scriptsize $92.2\pm0.1$ & \scriptsize$87.9\pm0.9$ & \scriptsize$76.5\pm1.0$ & \scriptsize$84.0\pm1.0$ \\
\bottomrule 

\toprule  
Dataset  & F-MNIST$^1$ & F-MNIST$^2$ & CIFAR-10$^1$ & CIFAR-10$^2$ &  &   \\
\midrule 
\footnotesize
VPU & \scriptsize \bm{$92.7\pm0.3$} & \scriptsize \bm{$90.8\pm0.6$} & \scriptsize\bm{$89.5\pm0.1$} & \scriptsize\bm{$88.8\pm0.8$} &  &  \\
\midrule
\footnotesize
nnPU$^*$ & \scriptsize$92.1\pm0.3$ & \scriptsize$90.7\pm1.4$ & \scriptsize$87.2\pm0.7$  & \scriptsize$86.5\pm1.7$ &  &    \\
\midrule
\footnotesize
uPU$^*$ &
\scriptsize$90.4\pm1.4$ & \scriptsize$74.1\pm1.9$ & \scriptsize$79.1\pm2.4$ & \scriptsize$68.7\pm0.4$ &  &  \\
\bottomrule 
\end{tabular}
\end{center}
\end{table}

\subsection{Mode collapse of GenPU \label{subsec:GenPU-mode-collapse}}

The failure of GenPU in the experiments is caused by mode collapse.
This is demonstrated in Fig. \ref{fig:mode_collapse}, which shows
the positive (a) and negative (b) images generated by GenPU. Positive
labels (\textquoteright Positive\textquoteright{} vs \textquoteright Negative\textquoteright )
are given by \textquoteright 1,4,7\textquoteright{} (Trouser, Coat,
Sneaker) vs \textquoteright 0,2,3,5,6,8,9\textquoteright{} (T-shirt/Top,
Pullover, Dress, Sandal, Skirt, Bag, Ankle boot). We observe that,
in spite of the good quality of the generated images, some modes are
neglected be the generators.

\begin{figure}
\centering{}\includegraphics[width=0.5\textheight]{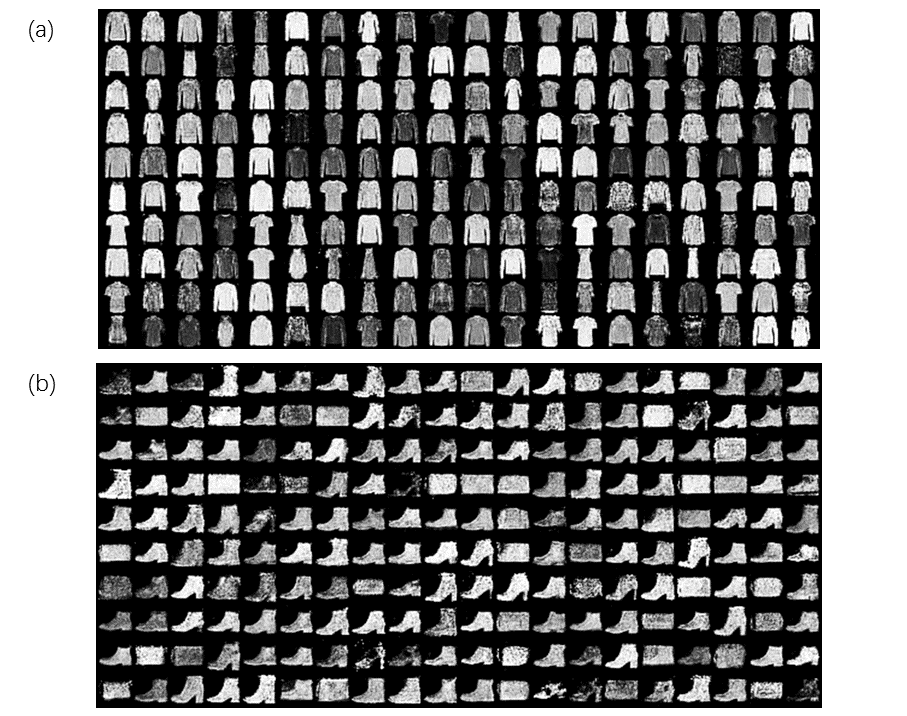}\caption{\label{fig:mode_collapse}Positive (a) and negative (b) samples generated
by GenPU on FashionMNIST with '1, 4, 7' as positive labels}
\end{figure}

\subsection{KM2, nnPU and uPU under selection bias \label{subsec:selection-bias}}

Table \ref{tab:bias_KM2} shows that the class prior estimation method
KM2 significantly affected by the selection bias, which also yields
poor performance of nnPU. As can be observed in Fig. \ref{fig:bias-acc-pi},
nnPU is even more robust to selection bias if the accurate $\pi_{P}$
is known a priori.

\begin{table}
\caption{\label{tab:bias_KM2}The class prior estimated by KM2 under selection
bias with the true class prior $\text{\ensuremath{\pi_{P}}}=0.3$}

\begin{center}
\begin{tabular}{ccccccccccc} 
\toprule 
$n_1/n_4$  & 1 & 2 & 3 & 4 & 5 & 6 & 7 & 8 & 9 & 10 \\ 
\midrule
estimated ${\pi}_P$ & 0.267  & 0.249 & 0.206 & 0.188 & 0.164 & 0.170 & 0.151 & 0.157 & 0.150 & 0.144 \\ 
\bottomrule 
\end{tabular} 
\end{center}
\end{table}

\begin{figure}
\begin{centering}
\includegraphics[scale=0.5]{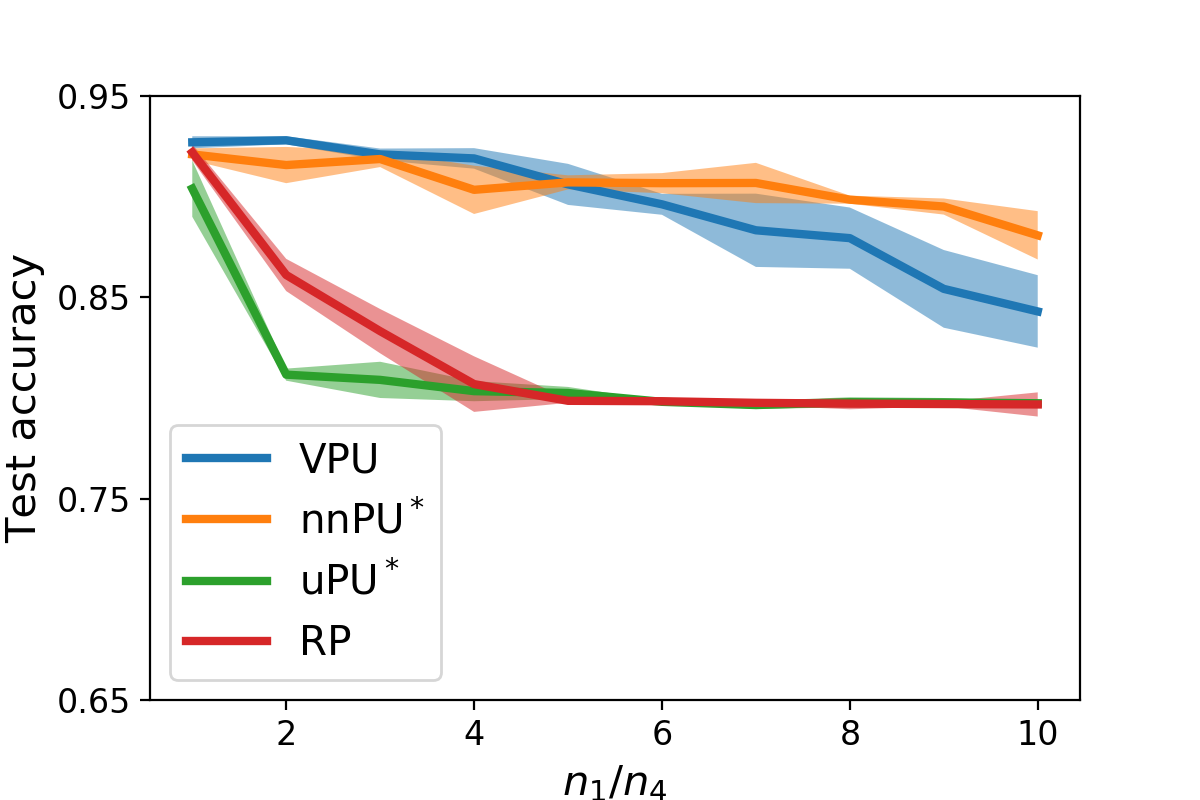}
\par\end{centering}
\caption{\label{fig:bias-acc-pi}Comparison of PU methods under selection bias
of $\mathcal{P}$, with accurate class prior $\pi_{P}$ known for
uPU and nnPU}
\end{figure}

\subsection{Alternative regularization terms\label{subsec:Alternative-regularization-terms}}

Mixup is a powerful regularization technique, but it might not be
applicable to domains other than image. Besides, its data-augmentation
nature undermines credibility of VPU's superiority shown in the experiments.
In fact, some other forms of regularization also work well, such as
adversarial training \cite{goodfellow2014explaining} and virtual
adversarial training \cite{miyato2018virtual}. Here we introduce
a large-margin regularization term, proposed in \cite{Wang2018Additive},
as an alternative for the Mixup-based regularization. It penalizes
the positive instances that are misclassified by \textgreek{F} or
have small margins between $\log\Phi\left(x\right)$ and $\log\left(1-\Phi\left(x\right)\right)$.
It is a smooth version of $\max\left\{ 0,\log\left(1-\Phi\left(x\right)\right)+\log\alpha-\log\Phi\left(x\right)\right\} $
and formulates as 
\begin{align}
\mathcal{L}_{reg-margin}\left(\Phi\right) & =\text{softplus}\left(\log\left(1-\Phi\left(x\right)\right)+\log\alpha-\log\Phi\left(x\right)\right)\nonumber \\
 & =\log\left(1+\alpha\frac{1-\Phi\left(x\right)}{\Phi\left(x\right)}\right).\label{eq:L_margin}
\end{align}
Table \ref{tab:acc-margin} reports the results of experiments with
the same setting as in Section \ref{subsec:UCI-experiments} and \ref{subsec:Image-datasets}.
Though not as good as the Mixup-based regularization, the large-margin
regularization significantly outperforms nnPU in most experiments.

\begin{table}
\caption{\label{tab:acc-margin}Classification accuracies (\%) on image and
UCI datasets of experiments with the same setting as in Section \ref{subsec:UCI-experiments}
and \ref{subsec:Image-datasets}. VPU w/ Mixup is the VPU we develop
in main body of this paper, while VPU w/ margin replaces the regularization
with the large-margin loss (\ref{eq:L_margin}).}

\begin{center}
\footnotesize
\begin{tabular}{ccccccc}  
\toprule  
Dataset  
& Page Blocks$^1$ & Page Blocks$^2$ & Grid Stability$^1$&  Grid Stability$^2$ & Avila$^1$& Avila$^2$ \\
\midrule 
\footnotesize
VPU w/ Mixup&
\scriptsize $93.6\pm0.4$ & \scriptsize $93.5\pm0.7$ & \scriptsize \bm{$92.6\pm0.3}$ &  \scriptsize $89.5\pm0.5$ & \scriptsize \bm{$82.0\pm0.9$} & \scriptsize \bm{$87.2\pm0.5$} \\
\midrule 
\footnotesize
VPU w/ margin&
\scriptsize \bm{$95.6\pm1.3$} & \scriptsize \bm{$94.0\pm0.6}$ & \scriptsize \bm{$92.6\pm0.3}$ &  \scriptsize \bm{$90.5\pm0.5$} & \scriptsize $81.4\pm0.3$ & \scriptsize $86.8\pm0.5$ \\
\midrule 
\footnotesize
nnPU &
\scriptsize $93.4\pm1.1$ & \scriptsize$90.2\pm2.6$ &\scriptsize$80.8\pm2.5$ &\scriptsize$84.1\pm1.8$ &\scriptsize$73.3\pm2.0$ &\scriptsize $83.1\pm2.1$ \\
\midrule
Dataset  & F-MNIST$^1$ & F-MNIST$^2$ & CIFAR-10$^1$ & CIFAR-10$^2$ & STL-10$^1$ & STL-10$^2$ \\
\midrule 
\footnotesize
VPU w/ Mixup& \scriptsize \bm{$92.7\pm0.3$} & \scriptsize $90.8\pm0.6$ & \scriptsize\bm{$89.5\pm0.1$} & \scriptsize $88.8\pm0.8$ & \scriptsize\bm{$79.7\pm1.5$} & \scriptsize \bm{$83.7\pm0.1$} \\
\midrule 
\footnotesize
VPU w/ margin& \scriptsize $92.6\pm0.4$ & \scriptsize \bm{$91.1\pm0.2$} & \scriptsize   $89.2\pm0.2$ & \scriptsize\bm{$88.9\pm0.3$} & \scriptsize $74.5\pm0.9$ & \scriptsize $82.6\pm1.5$ \\
\midrule 
\footnotesize
nnPU & \scriptsize$90.8\pm0.6$ & \scriptsize$ 90.5\pm0.4$ & \scriptsize$85.6\pm2.3$ & \scriptsize$85.5\pm2.0$ & \scriptsize $78.3\pm1.2$ & \scriptsize$82.2\pm0.5$ \\
\bottomrule 
\end{tabular}
\end{center}
\end{table}

\subsection{Other metric for comparison\label{subsec:Other-metric-for}}

Accuracy might not be the best metric, especially when data sets are
imbalanced. Therefore, except accuracy shown in the main body, we
here also report in Table \ref{tab:image data-AUC} the area under
curve (AUC) values of experiments on image datasets

\begin{table}[tbh]
\caption{\label{tab:image data-AUC}AUC values of compared methods on FashionMNIST
(abbreviated as ``F-MNIST''), CIFAR-10 and STL-10 datasets. Experiment
settings are the same as in Section \ref{subsec:Image-datasets}.}

\begin{center}  
\begin{tabular}{cccccccc}    
\toprule   
Dataset  & F-MNIST$^1$ & F-MNIST$^2$ & CIFAR-10$^1$ & CIFAR-10$^2$ & STL-10$^1$ & STL-10$^2$ \\   \midrule    
\footnotesize   VPU & \scriptsize \bm{$0.973\pm0.002$} & \scriptsize \bm{$0.957\pm0.005$} & \scriptsize\bm{$0.956\pm0.001$} & \scriptsize\bm{$0.953\pm0.003$} & \scriptsize\bm{$0.976\pm0.002$} & \scriptsize \bm{$0.963\pm0.003$} \\   
\midrule    
\footnotesize   nnPU & \scriptsize$0.961\pm0.004$ & \scriptsize$ 0.945\pm0.005$ & \scriptsize$0.954\pm0.003$ & \scriptsize$0.953\pm0.002$ & \scriptsize $0.850\pm0.007$ & \scriptsize$0.898\pm0.004$ \\   
\midrule    
\footnotesize   uPU & \scriptsize$0.955\pm0.006$ & \scriptsize$0.918\pm0.008$ &\scriptsize$0.952\pm0.003$ & \scriptsize$0.949\pm0.004$ & \scriptsize$0.823\pm0.013$ & \scriptsize$0.862\pm0.014$\\   
\midrule    
\footnotesize   Genpu & \scriptsize$0.673\pm0.018$ & \scriptsize$0.868\pm0.007$ & \scriptsize$0.790\pm0.012$ & \scriptsize$0.811\pm0.014$ & \scriptsize$0.789\pm0.004$ & \scriptsize$0.793\pm0.011$\\   
\midrule    
\footnotesize   RP & \scriptsize \bm{$0.973\pm0.001$} & \scriptsize$0.954\pm0.002$ & \scriptsize$0.953\pm0.002$ & \scriptsize$0.951\pm0.003$ & \scriptsize$0.829\pm0.019$ & \scriptsize$0.851\pm0.015$ \\   
\bottomrule   
\end{tabular} 
\end{center}
\end{table}

\subsection{nnPU with Mixup}

To further demonstrate the advantage of VPU over nnPU, we also conduct
experiments on nnPU on FashionMNIST with unlabeled data augmented
by MixUp. The classification accuraries are reportd in Table \ref{tab:nnpu-with-mixup},
which shows that nnPU does not significantly benefit from Mixup.

\begin{table}[tbh]
\caption{\label{tab:nnpu-with-mixup} Classification accuracies (\%) of nnPU
with Mixup on FashionMNIST. Experiment settings are the same as in
Section \ref{subsec:Image-datasets}. The {*} mark indicates accurate
class prior known.}

\begin{center}  
\begin{tabular}{cccccc}    
\toprule   
&VPU & nnPU & nnPU+MixUp & nnPU$^*$ & nnPU$^*$+MixUp \\
\midrule
F-MNIST$^1$ & \scriptsize $\bm{92.7\pm 0.3}$ & \scriptsize $90.8\pm 0.6$ &\scriptsize $91.0\pm 0.6$ &\scriptsize$92.1\pm 0.3$ &\scriptsize $92.4\pm0.5$\\
\midrule
F-MNIST$^2$ & \scriptsize $\bm{90.8\pm 0.6}$ & \scriptsize $90.5\pm 0.4$ &\scriptsize $89.9\pm 0.3$ &\scriptsize$90.7\pm 1.4$ & \scriptsize$90.7\pm0.4$\\

\bottomrule   
\end{tabular} 
\end{center}
\end{table}

\section{Extension \label{sec:Extension}}

One alternative to the variational loss is
\begin{eqnarray*}
\mathcal{L}_{\mathrm{JS}}(\Phi) & = & \max_{D:\mathbb{R}^{d}\mapsto[0,1]}\mathbb{E}_{f_{P}}\left[\log D(x)\right]+\mathbb{E}_{f_{\Phi}}\left[\log\left(1-D(x)\right)\right],\\
 & = & \max_{D:\mathbb{R}^{d}\mapsto[0,1]}\int f_{P}(x)\log D(x)+f_{\Phi}(x)\log\left(1-D(x)\right)\mathrm{d}x.
\end{eqnarray*}
Here $D$ can be interpreted as a discriminator as in GAN, which intends
to separate the samples drawn from $f_{P}$ and those obtained by
sampling from $f_{\Phi}$. By setting
\[
\frac{\partial\left(f_{P}(x)\log D(x)+f_{\Phi}(x)\log\left(1-D(x)\right)\right)}{\partial D(x)}=0,
\]
we can obtain that the optimal $D$ is
\[
D(x)=\frac{f_{P}(x)}{f_{P}(x)+f_{\Phi}(x)},
\]
and
\begin{eqnarray*}
\mathcal{L}_{\mathrm{JS}}(\Phi) & = & \int f_{P}(x)\log\frac{f_{P}(x)}{\frac{1}{2}\left(f_{P}(x)+f_{\Phi}(x)\right)}\mathrm{d}x+\log\frac{1}{2}\\
 &  & +\int f_{\Phi}(x)\log\frac{f_{\Phi}(x)}{\frac{1}{2}\left(f_{P}(x)+f_{\Phi}(x)\right)}\mathrm{d}x+\log\frac{1}{2}\\
 & = & 2\mathrm{JS}\left(f_{P}||f_{\Phi}\right)-\log4,
\end{eqnarray*}
where $\mathrm{JS}\left(f_{P}||f_{\Phi}\right)$ denotes the Jensen-Shannon
divergence between $f_{P}$ and $f_{\Phi}$. Thus, $\mathcal{L}_{\mathrm{JS}}(\Phi)-\mathcal{L}_{\mathrm{JS}}(\Phi^{*})\ge0$
for all $\Phi$ since $f_{P}=f_{\Phi^{*}}$. In practice, we can approximate
$D$ by another neural network, and minimize $\mathcal{L}_{\mathrm{JS}}$
by adversarial learning.

Another choice of variational loss can be derived from a weighted
$L^{2}$ distance between $f_{\Phi}$ and $f_{P}$ as
\begin{eqnarray*}
\int f(x)^{-1}\left(f_{\Phi}(x)-f_{P}(x)\right)^{2}\mathrm{d}x & = & \frac{\int f(x)\Phi(x)^{2}\mathrm{d}x}{\mathbb{E}_{f}[\Phi(x)]^{2}}-2\frac{\int f_{P}(x)\Phi(x)\mathrm{d}x}{\mathbb{E}_{f}[\Phi(x)]}\\
 &  & +\int f(x)^{-1}f_{P}(x)^{2}\mathrm{d}x,\\
 & = & \frac{\mathbb{E}_{f}[\Phi(x)^{2}]}{\mathbb{E}_{f}[\Phi(x)]^{2}}-2\frac{\mathbb{E}_{f_{P}}[\Phi(x)]}{\mathbb{E}_{f}[\Phi(x)]}\\
 &  & +\int f(x)^{-1}f_{P}(x)^{2}\mathrm{d}x,\\
 & = & \mathcal{L}_{2}(\Phi)+\int f(x)^{-1}f_{P}(x)^{2}\mathrm{d}x,
\end{eqnarray*}
where
\[
\mathcal{L}_{2}(\Phi)\triangleq\frac{\mathbb{E}_{f}[\Phi(x)^{2}]}{\mathbb{E}_{f}[\Phi(x)]^{2}}-2\frac{\mathbb{E}_{f_{P}}[\Phi(x)]}{\mathbb{E}_{f}[\Phi(x)]}
\]
and $\int f(x)^{-1}f_{P}(x)^{2}\mathrm{d}x$ is a constant independent
of $\Phi$. It can be seen from the above that the loss $\mathcal{L}_{2}$
satisfies
\begin{eqnarray*}
\mathcal{L}_{2}(\Phi)-\mathcal{L}_{2}(\Phi^{*}) & = & \int f(x)^{-1}\left(f_{\Phi}(x)-f_{P}(x)\right)^{2}\mathrm{d}x\\
 & \ge & 0.
\end{eqnarray*}

\end{document}